\documentclass[11pt]{article}

\usepackage[final]{acl}

\usepackage{times}
\usepackage{latexsym}
\usepackage[T1]{fontenc}
\usepackage[utf8]{inputenc}
\usepackage{microtype}
\usepackage{inconsolata}

\usepackage{amsmath,amssymb,amsthm,mathtools,bm}
\usepackage{thmtools}
\declaretheorem[name=Theorem,numberwithin=section]{theorem}
\declaretheorem[name=Proposition,sibling=theorem]{proposition}
\declaretheorem[name=Corollary,sibling=theorem]{corollary}
\declaretheorem[name=Lemma,sibling=theorem]{lemma}
\declaretheorem[name=Definition,style=definition,sibling=theorem]{definition}

\declaretheorem[name=Remark,style=remark,sibling=theorem]{remark}

\usepackage{graphicx}
\usepackage{subcaption}
\usepackage{booktabs}
\usepackage{multirow}
\usepackage{array}
\usepackage{makecell}
\usepackage{longtable}
\usepackage[ruled,linesnumbered]{algorithm2e}
\usepackage{tikz}
\usetikzlibrary{positioning,arrows.meta,calc,fit,backgrounds,shapes.geometric,decorations.pathreplacing}

\usepackage{cleveref}

\usepackage{xcolor}
\usepackage{enumitem}
\usepackage{listings}
\usepackage{CJKutf8}
\setlist{nosep}

\newcommand{\Lar}{\mathcal{L}_{\mathrm{AR}}}
\newcommand{\Lsfr}{\mathcal{L}_{\mathrm{SFR}}}
\newcommand{\Lmtp}{\mathcal{L}_{\mathrm{MTP}}}
\newcommand{\Ltot}{\mathcal{L}_{\mathrm{total}}}
\newcommand{\E}{\mathbb{E}}
\newcommand{\R}{\mathbb{R}}
\newcommand{\KL}{\mathrm{KL}}
\newcommand{\CE}{\mathrm{CE}}
\newcommand{\softmax}{\mathrm{softmax}}
\newcommand{\simplex}[1]{\Delta^{|#1|-1}}
\newcommand{\code}[1]{\texttt{\small #1}}

\newcommand{\SFR}{\textsc{SFR}}
\newcommand{\CSSB}{\textsc{CS-SB}}
\let\SS\undefined
\newcommand{\SS}{\textsc{SS}}

\title{Semantic Flow Regularization:\\
Teaching LLMs to Generate Diverse Yet Coherent Responses}

\author{Kerui Peng, Feifei Li, Xingyu Fan, Wenhui Que\thanks{Corresponding author.} \\
  WeChat, Tencent Inc., Beijing, China \\
  \texttt{\{keruipeng, niyali, fanxfan, victorque\}@tencent.com} \\}

\begin{document}
\maketitle

\begin{abstract}
When large language models are fine-tuned to generate persona- or tone-conditioned responses, their output diversity is severely limited—a failure we term \textbf{Cross-Style Collapse}. We trace this collapse to the cross-entropy objective, which under shared representations tends to suppress diverse continuations. We propose \textbf{Semantic Flow Regularization (\SFR)}, a lightweight auxiliary objective that supervises the backbone with continuous sentence-encoder embeddings of future segments via conditional flow matching. The stochastic flow source preserves multi-modality by construction; the flow-matching head is discarded at inference, adding \textbf{zero deployment cost}. On a large-scale industrial dialogue dataset (Qwen3-32B, 9 personas), SFR improves output diversity, style fidelity, and response quality over SFT. We further validate on the public LiveCodeBench-v5 (Qwen2.5-Coder-7B-Instruct), where SFR consistently improves pass@k, confirming generality beyond stylized dialogue. A controlled comparison on MBPP reveals Multi-Token Prediction to be a degenerate special case of SFR.\footnote{Code: \url{https://github.com/WeAgentAI/SFR}.}

\end{abstract}

\section{Introduction}\label{sec:intro}

LLM-based conversational agents are increasingly used in applications
requiring both factual correctness and stylistic diversity. Our
production deployments include personalized question answering for AI
avatars on WeChat Official Accounts and personalized generation for
WeChat Moments' AI-assisted writing, where responses must reflect each
user's communication style while remaining grounded in relevant
published content. In these settings, the same query can admit many
valid, persona-dependent responses.

Standard supervised fine-tuning (SFT) optimizes a token-level
cross-entropy (CE) loss. When valid continuations are multi-modal, whether across different style conditions or across multiple valid solutions under the same condition, CE under finite-capacity shared representations tends to encourage shared high-probability continuations, suppressing output diversity even when the underlying target distribution is rich---a link between the likelihood objective and generic, diversity-poor responses established at least since \citet{li2015diversity}.
In the stylized-dialogue setting this manifests as what we term Cross-Style Collapse, echoing the text degeneration~\citep{nucleus} and persona homogenization~\citep{chameleonlimit} observed in prior work: swapping the persona on a fixed query often flips only
a greeting token while the response body stays nearly unchanged.
A related symptom
appears in code generation, where pass@$k$ gains plateau because
the model collapses to a single template despite multiple correct
solutions existing.

Several families of existing work address aspects of this gap, but
none fully resolves it. \emph{Decoding-time} methods such as nucleus
sampling~\citep{nucleus}, contrastive
decoding~\citep{li2023contrastive}, classifier-guided
decoders~\citep{ctrl,fudge,gedi} reweight logits without
changing the backbone's internal representations, so the underlying
mode-averaging persists. \emph{Per-style fine-tuning}~\citep{westar}
trains a separate parameter set for each persona, achieving strong stylistic control but incurring storage
and maintenance costs that grow linearly with the number of
styles. \emph{Training-time discrete extensions} such as
unlikelihood~\citep{unlikelihood}, SimCTG~\citep{simctg}, and
multi-token prediction~\citep{gloeckle2024better} extend the
supervision horizon yet remain in one-hot geometry and do not
explicitly model the multi-modal distribution over styles.

We propose \textbf{Semantic Flow Regularization (\SFR)}, a single
auxiliary objective added during fine-tuning that addresses the
problem from a different angle: teaching the backbone to
\emph{see} the future stylistic configuration of each response.
At every response position $t$ a lightweight flow-matching (FM) head
$v_\phi$ learns to transport a random unit-sphere source $z_0$ to the
sentence-encoder embedding $z_1^{(t)}=E(y_{t+1:t+k})$ of the
upcoming suffix, conditioned on the backbone hidden state $h_t$.
Because the source is stochastic, the learned vector field
preserves the full conditional distribution rather than collapsing
to its mean (\Cref{app:theory}); the backbone is thus encouraged
to encode style-discriminative information at every position. At
inference time $v_\phi$ and $E$ are simply deleted---the deployed
model is bit-identical to a standard AR LM with no additional
latency or memory.

Our contributions:
\begin{itemize}
  \item We introduce \SFR{}, a training-time auxiliary that
    injects continuous, segment-level future supervision into AR
    language models at zero inference cost, specifically targeting
    the mode-averaging problem in multi-style fine-tuning.
  \item We show theoretically that stochastic flow-matching
    supervision preserves conditional multi-modality where
    deterministic regression alternatives provably cannot.
  \item We validate \SFR{} on a large-scale industrial stylized-dialogue dataset, showing significant improvements in both output diversity and style fidelity; since this dataset is proprietary, we further confirm the generality of \SFR{} on the public LiveCodeBench-v5 benchmark with consistent pass@k gains. A controlled comparison on MBPP additionally connects \SFR{} to Multi-Token Prediction as a degenerate special case.
\end{itemize}

\begin{figure*}[t]
  \centering
  \begin{minipage}[t]{0.495\textwidth}
    \centering
    \includegraphics[width=\linewidth]{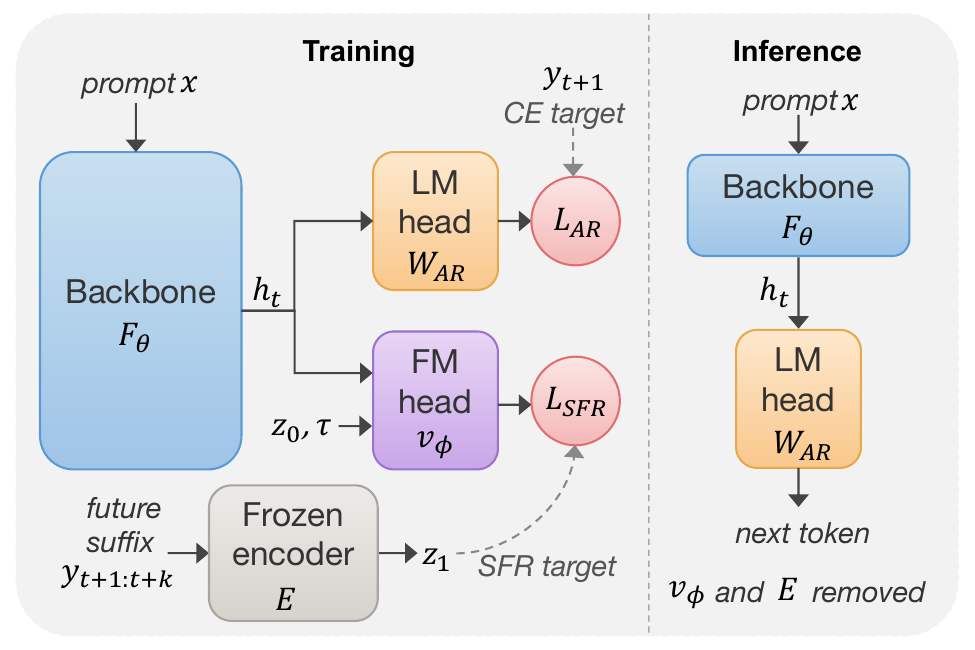}
  \end{minipage}%
  \hfill
  \begin{minipage}[t]{0.495\textwidth}
    \centering
    \includegraphics[width=\linewidth]{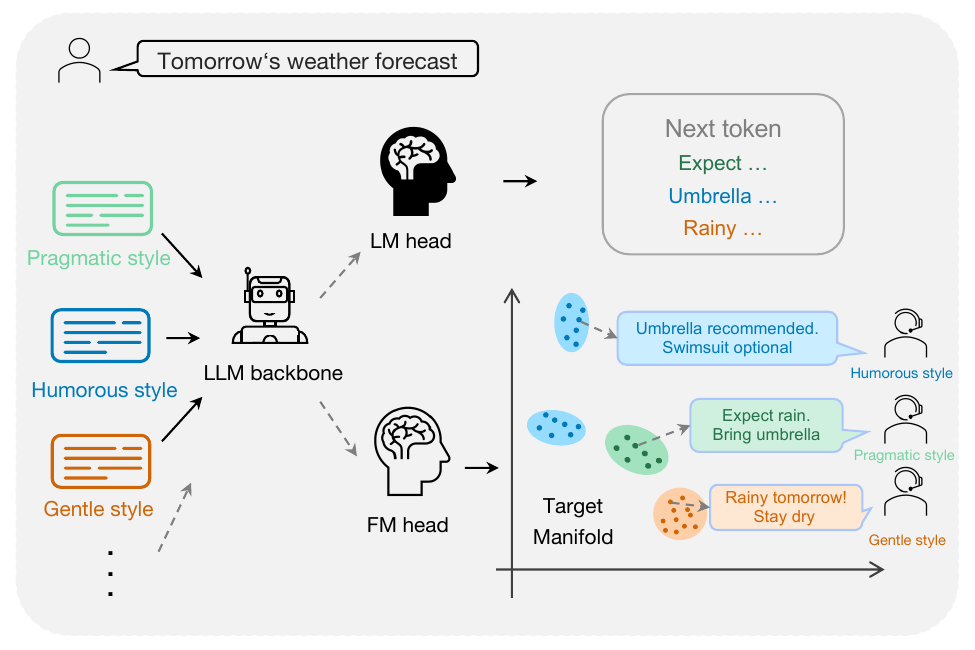}
  \end{minipage}
  \caption{\textbf{Left:} Training and inference architecture of
  \SFR{}. During training, the backbone $F_\theta$ produces hidden
  states $h_t$ consumed by both the LM head $W_{\mathrm{AR}}$
  (standard next-token loss $\Lar$) and the auxiliary FM head
  $v_\phi$ (flow-matching loss $\Lsfr$, supervised by the frozen
  encoder target $z_1=E(y_{t+1:t+k})$). At inference, $v_\phi$ and
  $E$ are discarded; the deployed model is a standard AR LM.
  \textbf{Right:} Illustration of style-discriminative generation.
  The input prompt consists of a style description concatenated with
  the user query. Without \SFR{}, the next-token-only LM head tends toward
  style-collapsed outputs. The FM head regularizes the backbone
  toward distinct regions on the target embedding manifold for each
  style, enabling the model to produce genuinely differentiated
  responses across personas.}
  \label{fig:arch}
\end{figure*}

\section{Related Work}\label{sec:related}

\subsection{Stylized and persona-grounded generation}

Style-controllable dialogue spans both benchmarks and methods.
PersonaChat~\citep{personachat,convai2}, LIGHT~\citep{light},
RoleLLM~\citep{rolellm} and
CharacterGLM~\citep{characterglm} supply persona
profiles and evaluate consistency.
On the method side, per-style fine-tuning~\citep{westar} trains a
dedicated parameter set (or LoRA adapter) per persona,
yielding strong control but linear storage cost in the number of
styles. Decoding-time approaches steer logits without touching the backbone, such as CTRL~\citep{ctrl},
FUDGE~\citep{fudge}, GeDi~\citep{gedi},
PPLM~\citep{pplm}, so the backbone's representations remain mode-averaged and insensitive to the persona prompt.
Closest in spirit are latent-variable dialogue models that also
inject a stochastic signal to escape mode-averaging: VHRED~\citep{vhred}
introduces a hierarchical Gaussian latent over the whole utterance,
and the CVAE dialog model of \citet{cvaedialog} conditions generation
on a sampled discourse-level latent; StyleFusion~\citep{stylefusion}
targets the diversity--relevance trade-off in stylized dialogue most
directly, fusing a style-specific and a content-specific latent space.
All three inject stochasticity into a \emph{discrete-token} decoder
and are trained end-to-end as the primary generative model, so the
stochastic component cannot be removed at inference without
retraining; \SFR{} instead couples the stochastic source to a
\emph{continuous, sentence-level} embedding target via flow matching,
and drops the auxiliary head entirely after training, leaving a
plain AR decoder identical to one trained with SFT. \SFR{} thus
takes a middle path relative to per-style parameters and
decoding-time classifiers: one unified training objective that
makes the backbone itself style-sensitive, requiring neither
per-persona parameters nor decoding-time classifiers.

\subsection{Training-time future supervision}

Several methods augment CE with longer-horizon targets.
Unlikelihood~\citep{unlikelihood}, SimCTG~\citep{simctg} and
MixCE~\citep{mixce} penalise degeneration at horizon $1$.
MTP~\citep{gloeckle2024better,deepseekv3} predicts $K$ future
tokens but remains in discrete one-hot space.
Continuous-target alternatives move into embedding space:
representation alignment via MSE or
cosine~\citep{film} regresses to a fixed embedding, while
JEPA~\citep{jepa} and V-JEPA~\citep{vjepa} predict latent
representations of future inputs in vision. JEPA is architecturally
the closest predecessor---it also uses a predictor network
conditioned on context to forecast future embeddings. However, JEPA
operates in the vision domain without an autoregressive generator,
its predictor is deterministic (collapsing to the conditional
mean; \Cref{prop:modeavg}), and it is not designed as a drop-in
auxiliary for zero-overhead AR deployment. \SFR{} differs on all
three counts: it is integrated
with an AR LLM that directly generates text, it uses a stochastic
FM source to preserve multi-modality, and the FM head is deleted
at deployment with zero inference overhead.

Concurrent work~\citep{ssd2026} observes that token positions
divide into \emph{Locks} (one dominant continuation) and
\emph{Forks} (multiple valid continuations), but does not
explicitly focus on locating which positions fall into which category.
In \Cref{sec:analysis-vfd} we show that the FM head trained by
\SFR{} yields an explicit per-position Lock/Fork diagnostic.

\subsection{Flow and diffusion language models}

Diffusion-LM~\citep{li2022diffusionlm}, SEDD~\citep{sedd},
Plaid~\citep{plaid} and RFFlow~\citep{rfflow} model continuous
multi-modal sequence distributions but replace the AR decoder with
an iterative denoiser, breaking compatibility with standard serving
stacks. The key distinction is that these methods use
flow/diffusion as the \emph{generation} mechanism, whereas \SFR{}
uses it purely as a \emph{training-time regularizer}: the flow head
is discarded before deployment, leaving a plain AR LM with zero
architectural or inference-cost overhead.

\section{Method}\label{sec:method}

\subsection{Overall framework}\label{sec:method-framework}

The backbone $F_\theta$ and the LM head $W_{\mathrm{AR}}$ are
unchanged. We attach a lightweight conditional vector-field head
$v_\phi(z_\tau,\tau;h_t)$ that participates only in the auxiliary
loss and is dropped at inference. Training minimizes
\begin{equation}
  \Ltot \;=\; \Lar \;+\; \lambda\cdot \Lsfr,
  \label{eq:total}
\end{equation}
where $\Lar$ is the mean next-token cross-entropy over all
response positions and $\Lsfr$ is the mean flow-matching loss
defined in \eqref{eq:sfr-loss} below.
\Cref{fig:arch} (left) summarizes the training-time data flow and the
deployment-time graph, which is identical to a vanilla AR LM, while
\Cref{fig:arch} (right) illustrates how the FM head encourages
style-discriminative backbone representations.

In principle, $v_\phi$ can be attached after any intermediate
layer of the backbone: an earlier tap gives the FM head access to
lower-level features but also requires a larger predictor to
bridge the representational gap between that layer and the final
semantic target $z_1$. We attach $v_\phi$ after the last
Transformer layer, so that the conditioning $h_t$ is the same
representation that the LM head (i.e.\ the final un-embedding
projection followed by softmax) operates on. This choice
maximizes the information available to $v_\phi$ while keeping the
predictor small, and ensures that the
FM gradient reshapes precisely the representation that the AR
head consumes. Concretely, $v_\phi$ fuses its three inputs as
follows: the scalar $\tau$ is first mapped, via a sinusoidal
timestep embedding (as in standard diffusion time-conditioning),
to a vector of the same dimension as $z_\tau$; this embedding,
$z_\tau$, and the conditioning hidden state $h_t$ are concatenated
along the feature dimension, passed through an input projection,
and then through several residual blocks with LayerNorm, before a
zero-initialized output projection produces $\hat v$. Hidden width
and depth scale with the backbone (\Cref{app:hparams}). A simple architecture is intentional: the FM
head's role is to supply gradient signal to the backbone rather
than to solve the flow task itself, so limited capacity prevents
it from short-circuiting the auxiliary objective (cf.\
\Cref{sec:method-stab}).

We validate the framework in the SFT setting; extending it to
pretraining, LoRA, or RLHF is left to future work.

\subsection{Auxiliary objective}\label{sec:method-loss}

For each response position $t$, the auxiliary target is the
L2-normalized sentence-encoder embedding of the next $k$ tokens,
\begin{equation}
  z_1^{(t)} \;=\; E\bigl(y_{t+1},\dots,y_{t+k}\bigr)\;\in\;\mathbb{S}^{d_z-1},
  \label{eq:z1}
\end{equation}
where $E$ is a frozen sentence encoder (BGE-Large~\citep{bge} in our
experiments) and $k$ is the suffix horizon. We supervise the FM
head with conditional FM
(CFM;~\citealp{lipman2023flow,liu2023flow}): given a random source
$z_0\sim\mathrm{Unif}(\mathbb S^{d_z-1})$ (a standard Gaussian
sample subsequently renormalized to the unit sphere, matching the
space in which $z_1$ lives), uniform $\tau\sim\mathcal U[0,1]$,
and straight-line interpolation
$z_\tau = (1{-}\tau)z_0 + \tau z_1^{(t)}$, the loss at position
$t$ is
\begin{equation}
  \Lsfr^{(t)} \;=\; \E_{\tau,z_0}\!\bigl\|v_\phi(z_\tau,\tau;h_t) - (z_1^{(t)} - z_0)\bigr\|_2^2.
  \label{eq:sfr-loss}
\end{equation}
The full training loss $\Lsfr$ is the mean of
\eqref{eq:sfr-loss} over all response positions.

Each $h_t$ therefore receives a \emph{continuous}, $d_z$-dimensional
target rather than a one-hot symbol, and a \emph{segment-level}
target spanning the next $k$ tokens rather than only $y_{t+1}$.
The randomness of $z_0$ is also essential rather than cosmetic,
formalized by the following two results (proofs in \Cref{app:theory-fmmm}):

\begin{proposition}[Mode averaging]\label{prop:modeavg}
For a conditional distribution $P(z_1\mid c)$ with finite second
moment, the unique minimizer of $\min_f \E\|f(c)-z_1\|_2^2$ is the
conditional mean $\E[z_1\mid c]$. If $P(z_1\mid c)$ is multi-modal,
this minimizer coincides with no mode and may lie in a low-density
region between them.
\end{proposition}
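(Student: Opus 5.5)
The plan is the standard bias--variance (orthogonal projection) argument in $L^2$, followed by an explicit example for the multi-modality claim. First, fix any measurable $f$ with $\E\|f(c)\|_2^2<\infty$; other $f$ give infinite risk because $z_1$ has finite second moment. Write $m(c)=\E[z_1\mid c]$, which is well defined and square-integrable by the finite-second-moment assumption and Jensen's inequality. Expanding $f(c)-z_1=(f(c)-m(c))+(m(c)-z_1)$ gives
\begin{equation*}
\E\|f(c)-z_1\|_2^2=\E\|f(c)-m(c)\|_2^2+\E\|m(c)-z_1\|_2^2+2\,\E\bigl\langle f(c)-m(c),\,m(c)-z_1\bigr\rangle .
\end{equation*}
The cross term vanishes by the tower property. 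Conditioning on $c$, the vector $f(c)-m(c)$ is $c$-measurable, and $\E[m(c)-z_1\mid c]=0$. Cauchy--Schwarz justifies the integrability needed to pull it out of the conditional expectation.

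The risk therefore equals the irreducible term $\E\|z_1-m(c)\|^2$ plus $\E\|f(c)-m(c)\|^2\ge 0$. It is minimized exactly when $f(c)=m(c)$ for $P_c$-almost every $c$, which gives existence and uniqueness up to null sets. I would state the uniqueness in that form explicitly.

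For the second sentence, I would note that ``coincides with no mode'' cannot hold for every multi-modal law; a law symmetric about a mode is a counterexample. I would therefore read the claim as ``need not coincide with any mode and may lie in a low-density region,'' and prove it by construction. Take $P(z_1\mid c)=\tfrac12\mathcal N(\mu,\sigma^2 I)+\tfrac12\mathcal N(-\mu,\sigma^2 I)$ with $\|\mu\|\gg\sigma$. Its mean is $0$. The density at $0$ is $(2\pi\sigma^2)^{-d/2}e^{-\|\mu\|^2/2\sigma^2}$, which is exponentially smaller than the density near $\pm\mu$, and $0$ is not a mode.

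The paper's targets live on $\mathbb S^{d_z-1}$, so I would also give the spherical version. With two antipodal modes, or any distribution symmetric under $z\mapsto -z$, the mean is $0$. This point lies off the sphere entirely, so it has zero density with respect to the target support. More generally, the mean of any non-degenerate distribution on the sphere lies strictly inside the ball, by strict convexity of the norm, and hence is never itself a valid target.

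The only delicate step is the second claim: it must be phrased as an existence statement rather than a universal one. The $L^2$ decomposition itself is routine.
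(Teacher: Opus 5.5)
Your proposal is correct and follows the paper's proof: you use the same $L^2$ decomposition around $\E[z_1\mid c]$, with the cross term removed by the tower property, and then an explicit two-mode example (the paper uses two atoms with masses $p,1-p$, whose mean lies on the segment between them). Your added care is sound and tightens points the paper's proof glosses over: integrability via Cauchy--Schwarz, uniqueness only up to $P_c$-null sets, and reading ``coincides with no mode'' existentially because it fails for laws symmetric about a mode. Your strict-convexity remark also recovers the universal version of that claim in the paper's own setting of targets on $\mathbb S^{d_z-1}$.
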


\begin{proposition}[Multi-modal capacity of CFM]\label{prop:fmmm}
Under the straight-line path $z_\tau=(1{-}\tau)z_0+\tau z_1$ with
stochastic $z_0\sim p_0$, the minimizer of \eqref{eq:sfr-loss} is
the marginal velocity $v^\star(z,\tau,c)=\E[z_1{-}z_0\mid z_\tau{=}z,c]$,
and the ODE $\dot z_\tau=v^\star(z_\tau,\tau,c)$ transports $p_0$ to
the full conditional target distribution $p_1(\cdot\mid c)$ at
$\tau{=}1$, preserving all of its modes.
\end{proposition}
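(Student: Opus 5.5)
The plan has two parts: first identify the minimizer of \eqref{eq:sfr-loss}, then show that the induced ODE reproduces $p_1(\cdot\mid c)$. For the first part I would reuse the argument of \Cref{prop:modeavg} pointwise. Fix $c$ (playing the role of $h_t$) and $\tau$. The loss is an $L^2$ regression of the random target $U=z_1-z_0$ on the input $(z_\tau,\tau,c)$. Decomposing $\E\|v(z_\tau,\tau,c)-U\|^2 = \E\|v-\E[U\mid z_\tau,\tau,c]\|^2 + \E\|U-\E[U\mid z_\tau,\tau,c]\|^2$, the cross term vanishes by the tower property. Hence the unique (a.e.\ with respect to the law of $(z_\tau,\tau,c)$) minimizer is $v^\star(z,\tau,c)=\E[z_1-z_0\mid z_\tau=z,c]$. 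Finite second moments hold trivially here, since both $z_0$ and $z_1$ lie on the unit sphere. I would state explicitly that $v_\phi$ is taken over an unrestricted function class, since the proposition is a population-level statement.

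For the transport claim, I would follow the standard conditional flow matching argument of \citet{lipman2023flow,liu2023flow}. Conditioned on $(z_0,z_1,c)$, the path $z_\tau=(1-\tau)z_0+\tau z_1$ has constant velocity $u=z_1-z_0$, so the conditional density $\delta$-path satisfies the continuity equation trivially. Let $p_\tau(\cdot\mid c)$ be the marginal law of $z_\tau$. For any smooth test function $\varphi$, I would compute
\[
\tfrac{d}{d\tau}\E[\varphi(z_\tau)\mid c]=\E[\nabla\varphi(z_\tau)\cdot(z_1-z_0)\mid c]=\E[\nabla\varphi(z_\tau)\cdot v^\star(z_\tau,\tau,c)\mid c],
\]
where the last equality again uses the tower property. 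This is the weak form of $\partial_\tau p_\tau+\nabla\cdot(p_\tau v^\star)=0$. Since $p_0$ and $p_1(\cdot\mid c)$ are the endpoint marginals by construction, the flow of $\dot z=v^\star$ pushes $p_0$ forward to $p_1(\cdot\mid c)$, provided the continuity equation with velocity $v^\star$ has a unique solution. In that case the pushforward of the ODE flow must coincide with $p_\tau$.

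The main obstacle is this uniqueness and well-posedness step. Uniqueness requires, for instance, $v^\star$ to be locally Lipschitz in $z$ with integrable bounds, and this can fail near $\tau=1$, where $v^\star$ may blow up when $p_1$ is concentrated. I would handle the obstacle as follows:
\begin{itemize}
\item Because $p_0$ is the uniform sphere law, which is smooth with full support on $\mathbb S^{d_z-1}$ (or its Gaussian pre-image), $p_\tau$ is smooth and positive for $\tau<1$. I would assume the mild regularity that makes $v^\star$ locally Lipschitz on $[0,1-\epsilon]$.
\item I would then apply the DiPerna--Lions / Ambrosio uniqueness for continuity equations, and pass $\epsilon\to0$ using weak continuity of $\tau\mapsto p_\tau$.
\end{itemize}

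Finally, "preserving all modes" follows immediately: the terminal law equals $p_1(\cdot\mid c)$ exactly, so every mode of the target distribution is present. This contrasts with \Cref{prop:modeavg}, where the deterministic regression collapses to the single point $\E[z_1\mid c]$. The contrast comes from the stochastic $z_0$, because different initial conditions are routed to different modes.
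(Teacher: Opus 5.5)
Your two steps are the paper's two steps. Step~1 of its proof is the same tower-property/$L^2$-projection argument via \Cref{prop:modeavg}. Step~2 obtains the continuity equation with velocity $v^\star$ from $\E[\dot z_\tau\mid z_\tau=z,c]=v^\star$, which is your weak-form computation, and then simply asserts transport with a pointer to \citet{lipman2023flow}. You are right that well-posedness is the real issue, but your way of settling it fails for the source the paper actually uses.

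The inference ``$p_0=\mathrm{Unif}(\mathbb S^{d_z-1})$ is smooth with full support, so $p_\tau$ is smooth and positive for $\tau<1$'' is false. The straight line leaves the sphere, and $p_\tau(\cdot\mid c)$ is the law of $\tau z_1$ convolved with the uniform measure on a sphere of radius $1-\tau$, which is a singular kernel. Take the two-mode example from \Cref{prop:modeavg}: $p_1=\tfrac12\delta_a+\tfrac12\delta_b$ with $a\neq b$ on the sphere. Then $p_\tau$ is the equal mixture of the uniform laws on $S_a(\tau)=\{z:\|z-\tau a\|=1-\tau\}$ and $S_b(\tau)$. It has no Lebesgue density, and $v^\star$ is determined only on the null set $S_a(\tau)\cup S_b(\tau)$. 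So neither your Lipschitz assumption nor DiPerna--Lions/Ambrosio uniqueness (which is uniqueness among solutions with bounded densities) can be invoked.

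Uniqueness in fact fails. On $S_a(\tau)\setminus S_b(\tau)$ the marginal velocity is $v^\star(z,\tau)=(a-z)/(1-\tau)$. Hence for every $z_0$ the line $\gamma(\tau)=(1-\tau)z_0+\tau a$ solves $\dot\gamma=v^\star(\gamma,\tau)$ for all but at most one $\tau\in(0,1)$. Sending every particle to $a$ is therefore an admissible solution of the ODE, and it pushes $p_0$ to $\delta_a$, dropping a mode. The branching happens at $\tau=0$, where $S_a(0)=S_b(0)$, not near $\tau=1$ as you anticipated. The path $p_\tau$ is realized only by a \emph{random} superposition of integral curves (each $z_0$ goes to $a$ or $b$ with probability $\tfrac12$), not by a deterministic flow.

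To close the gap, run the argument with a source that has a smooth, strictly positive density on $\R^{d_z}$. The Gaussian before renormalization works; your parenthetical mentions it, but it is not interchangeable with the sphere, and it is the Gaussian-path setting of the results the paper cites. Then:
\begin{itemize}
\item $p_\tau$ is the law of $\tau z_1$ convolved with $\mathcal N(0,(1-\tau)^2 I)$.
\item $v^\star(z,\tau,c)=(\E[z_1\mid z_\tau=z,c]-z)/(1-\tau)$.
\item $\nabla_z v^\star=\bigl(\tfrac{\tau}{(1-\tau)^2}\mathrm{Cov}(z_1\mid z_\tau=z,c)-I\bigr)/(1-\tau)$, which is bounded on $[0,1-\epsilon]$ because $\|z_1\|=1$.
\end{itemize}
So no regularity needs to be assumed. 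Cauchy--Lipschitz gives a unique flow, and uniqueness for the continuity equation with a Lipschitz field identifies its pushforward of $p_0$ with $p_\tau$. Letting $\epsilon\to0$ yields $p_1(\cdot\mid c)$.

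If you keep the sphere source, the most you can prove is the superposition statement: there is a probability measure on integral curves of $v^\star$ with marginals $p_\tau$. That preserves modes in distribution, but it is not transport by the ODE.
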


\Cref{prop:modeavg} shows that replacing $z_0$ by a fixed point
collapses \eqref{eq:sfr-loss} into deterministic regression whose
optimum is the conditional mean; \Cref{prop:fmmm} shows that a
stochastic source instead recovers the full conditional
multi-modal structure of $P(z_1\mid h_t)$. The motivation for
choosing flow matching over MSE / cosine alignment or diffusion
denoising is discussed in \Cref{sec:discussion-fm}.

The exact per-position encoding in \eqref{eq:z1} costs $O(L)$ calls
to $E$ per training example. We reduce this to $O(L/s)$ by
\emph{sparse anchor encoding}: $E$ is invoked only every $s$
tokens, and intermediate positions reuse the nearest anchor's
target. The stride ablation in \Cref{app:ablation} shows that
quality is roughly flat from $s{=}1$ to $s{=}16$ while training
time drops by ${\sim}24\%$; in practice $s$ can be chosen jointly
with the suffix length $k$.

\subsection{Inference}\label{sec:method-inference}

At deployment, $v_\phi$ and $E$ are removed and only
$(F_\theta, W_{\mathrm{AR}})$ is loaded. The resulting checkpoint
has the same architecture and inference graph as a vanilla AR LM
and can be served by any
standard tool (vLLM~\citep{vllm}, SGLang~\citep{sglang}, LoRA merging, GPTQ~\citep{gptq}/AWQ~\citep{awq}
quantization) without modification; at deployment \SFR{} adds no
extra parameters, no extra latency, and no extra decoding logic.

The argument for why this is sound is that $v_\phi$ has already
done its work by the end of training: through backpropagation it
has shaped $\theta$ so that $h_t$ encodes future-aware,
condition-discriminative information, and this property travels
with $\theta$ regardless of whether $v_\phi$ is loaded at
inference. \Cref{sec:analysis-tsne} verifies this directly by
showing that the matched-position hidden states
$h_t^{q,s_1},\dots,h_t^{q,s_m}$ become well-separated clusters
under \SFR, even though $v_\phi$ is no longer in the forward
graph.

\subsection{Training stabilisation}\label{sec:method-stab}

Na\"ively adding $\lambda\Lsfr$ to $\Lar$ exhibits two failure
modes. The first is \emph{head short-circuiting}: $v_\phi$ learns
the suffix mapping faster than the backbone reorganizes, $\Lsfr$
collapses to near zero, and the auxiliary gradient stops flowing
back into $\theta$. The second is \emph{early-step shock}: a
randomly initialized $v_\phi$ produces large initial gradients
that destabilize the pretrained backbone before any useful
auxiliary signal has formed. We address both with three
mechanisms that must act jointly---removing any one of them causes
training to diverge (either the backbone loss oscillates
violently or the FM head ceases to influence the backbone at
all).

First, the output projection of $v_\phi$ is zero-initialized, so
that $\hat v\equiv 0$ at step $0$ and $\Lsfr$ starts at the
benign constant $\E\|z_1{-}z_0\|_2^2$, preventing early-step
shock. Second, a two-phase warmup blocks FM gradients from
propagating into the backbone for the first $T_h$ steps (only
$v_\phi$ updates during this period), after which $\lambda$ ramps
linearly from $0$ to its target value over the next $T_\ell$
steps. Third, an exponential moving average with decay $\mu$ is
maintained on $v_\phi$'s parameters, making the predictor a
slow-moving target that the backbone can track without the
predictor racing ahead. Implementation details and schedule values
are given in \Cref{app:implementation,app:hparams};
\Cref{app:pseudocode} provides the complete training algorithm and
\Cref{app:designspace} a taxonomy of the design space.

\section{Experiments}\label{sec:exp}

We evaluate \SFR{} in three settings: (i) a large-scale industrial
stylized-dialogue task that stress-tests output diversity under
persona conditioning (\S\ref{sec:exp-style}), (ii) the public
LiveCodeBench-v5 benchmark that confirms generality on code
generation (\S\ref{sec:exp-lcb}), and (iii) a controlled comparison
with MTP on MBPP (\S\ref{sec:exp-bridge}). Single-factor ablations
on $\lambda$, $k$, $s$, and the three stabilizers are deferred to
\Cref{app:ablation}.

\subsection{Stylized dialogue}\label{sec:exp-style}

To the best of our knowledge, there exists no public dataset that
simultaneously provides articles, user queries, and large-scale
stylized replies from millions of authors. We therefore evaluate
directly on proprietary data from a widely-used, real-world
industrial official-accounts platform. We fine-tune
\textbf{Qwen3-32B}~\citep{qwen3} on ${\sim}170$k multi-persona
dialogue examples covering nine stylistic personas
(\Cref{app:data-style} details the full construction pipeline);
the held-out evaluation set contains ${\sim}150$ queries, and for each
method we generate one response per (query, persona) pair.

We compare against a \textbf{Base} reference (the frozen
Qwen3-32B backbone with persona prompt, without any fine-tuning)
and \textbf{SFT} (standard supervised fine-tuning on the
same multi-persona corpus). Our method, \textbf{\SFR}, is trained
on top of the same SFT recipe with the auxiliary flow-matching
objective of \eqref{eq:sfr-loss}. For the LLM-judge evaluation
we additionally include \textbf{DeepSeek-R1-prompt}, a strong
prompt-only baseline that calls a frozen DeepSeek-R1~\citep{deepseekr1} with the
same persona prompt.

\begin{table}[t]
\centering\small
\setlength{\tabcolsep}{4pt}
\begin{tabular}{lcccc}
\toprule
& SFT & SimCTG & CD & \textbf{\SFR} \\
\midrule
\CSSB$_1$ & $0.798_{\pm.007}$ & $0.813_{\pm.006}$ & $0.812_{\pm.009}$ & $\mathbf{0.783}_{\pm.006}$ \\
\CSSB$_2$ & $0.655_{\pm.011}$ & $0.680_{\pm.011}$ & $0.652_{\pm.015}$ & $\mathbf{0.628}_{\pm.011}$ \\
\CSSB$_3$ & $0.542_{\pm.014}$ & $0.574_{\pm.014}$ & $0.526_{\pm.020}$ & $\mathbf{0.509}_{\pm.013}$ \\
\CSSB$_4$ & $0.452_{\pm.015}$ & $0.487_{\pm.016}$ & $0.432_{\pm.022}$ & $\mathbf{0.418}_{\pm.014}$ \\
\bottomrule
\end{tabular}
\caption{Cross-style Self-BLEU (\CSSB$_n$, $n\in\{1,2,3,4\}$,
lower is better) at $T=0.6$ on Qwen3-32B, \textbf{averaged} over the
$148$-query test set, with $95\%$ paired-bootstrap CIs. SimCTG and
CD are additional diversity-enhancing baselines
(\Cref{app:baselines}).}
\label{tab:style-cssb}
\end{table}

\begin{table}[t]
\centering\small
\begin{tabular}{lccc}
\toprule
& R1-prompt & SFT & \textbf{\SFR{}} \\
\midrule
style-0 Ctx & 4.5 & 4.568 & \textbf{4.622} \\
style-0 Rel & 4.662 & 4.764 & \textbf{4.77} \\
style-0 \SS & 4.654 & 4.588 & \textbf{4.905} \\
style-0 Flu & 4.838 & 4.831 & \textbf{4.905} \\
style-1 Ctx & 4.175 & 4.216 & \textbf{4.257} \\
style-1 Rel & 4.562 & 4.581 & \textbf{4.628} \\
style-1 \SS & 3.117 & 2.858 & \textbf{3.135} \\
style-1 Flu & 4.781 & \textbf{4.838} & 4.791 \\
style-2 Ctx & 4.117 & \textbf{4.169} & 4.128 \\
style-2 Rel & 4.285 & \textbf{4.311} & 4.223 \\
style-2 \SS & 4.766 & 4.73 & \textbf{4.824} \\
style-2 Flu & 4.964 & 4.98 & \textbf{4.993} \\
style-3 Ctx & 4.511 & 4.568 & \textbf{4.696} \\
style-3 Rel & 4.759 & 4.818 & \textbf{4.851} \\
style-3 \SS & 3.891 & 3.932 & \textbf{4.73} \\
style-3 Flu & 4.796 & 4.831 & \textbf{4.98} \\
\midrule
\multicolumn{4}{l}{\footnotesize\emph{The remaining five styles are reported in \Cref{tab:style-scores-full}.}}\\
\midrule
\textbf{Average Ctx} & 4.368 & 4.427 & \textbf{4.463} \\
\textbf{Average Rel} & 4.604 & 4.661 & \textbf{4.666} \\
\textbf{Average \SS} & 4.329 & 4.243 & \textbf{4.401} \\
\textbf{Average Flu} & 4.811 & 4.86 & \textbf{4.881} \\
\bottomrule
\end{tabular}
\caption{Stylized-dialogue LLM-judge scores ($1$--$5$) at $T=0.8$
on Qwen3-32B. \emph{Average}: mean across all nine styles. Full
per-style breakdown in \Cref{app:style-full}.}
\label{tab:style-scores}
\end{table}

For cross-style multi-modality we report \emph{Cross-Style
Self-BLEU} (\CSSB;~\citealp{selfbleu}): for each query, we generate one reply per
style and compute the pairwise Self-BLEU among the nine resulting
replies; the final score averages over all queries in the test set.
Lower \CSSB{} indicates that the model produces genuinely distinct replies
under different persona prompts on the same query. We
additionally report four LLM-judge metrics scored by Qwen3-235B
on a $1$--$5$ Likert scale: \emph{context coherence} (Ctx),
measuring whether the response stays on-topic with respect to
the query; \emph{query relevance} (Rel), measuring whether the
response actually answers what the user asked; \emph{style
strength} (\SS), measuring how strongly the response exhibits
the target persona; and \emph{fluency} (Flu). We aggregate these
scores at the style level, reporting both the per-style
value and the across-style average.

\Cref{tab:style-cssb} reports \CSSB{} averaged across
all queries, with $95\%$ paired-bootstrap confidence intervals
(\Cref{app:significance}). Note that absolute \CSSB{} values are relatively high
because all replies are grounded in the same source article, so
the semantic core of all replies is shared by
design; the metric therefore isolates surface-level stylistic
variation.  \SFR{} consistently outperforms SFT across all
$n$-gram orders, with non-overlapping CIs at every order (e.g.\
$0.418_{\pm.014}$ vs.\ $0.452_{\pm.015}$ at $n{=}4$), indicating that
the flow-matching regularizer further diversifies stylistic
surface form beyond what SFT alone provides.
\textbf{SimCTG} and \textbf{CD} are two additional
diversity-enhancing baselines under the identical setup
(\Cref{app:baselines}); neither matches \SFR{}. SimCTG's \CSSB{}
is close to SFT's, indicating that its contrastive
representation loss does not translate into cross-style
separation. CD's \CSSB{} is competitive with \SFR{}'s (partially
overlapping CIs at $n{\ge}2$), but the LLM-judge breakdown in
\Cref{app:baselines} shows this comes with an abnormally low
style-strength score, indicating a surface-level decoding
artifact rather than genuine style-semantic diversification.
The gain is particularly pronounced on open-ended, opinion-rich
queries where the stylistic space is large (over $33\%$ relative
\CSSB{} reduction over SFT), and smallest on factual or procedural
queries with constrained answer forms; see \Cref{app:cssb-full}
for a per-query breakdown and qualitative analysis.

\begin{figure}[t]
  \centering
  \includegraphics[width=\linewidth]{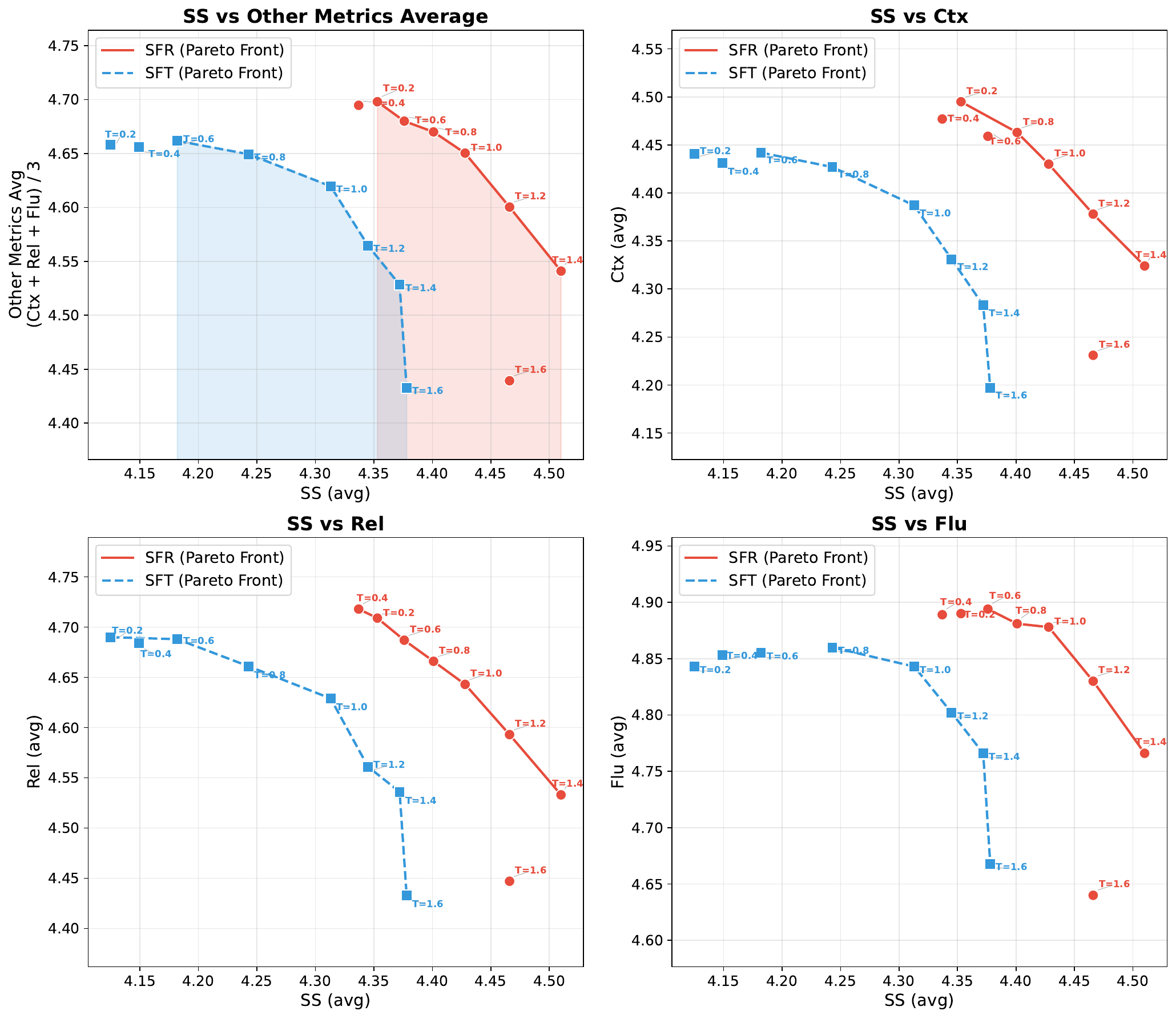}
  \caption{Style strength (\SS) versus other quality metrics
  (Ctx, Rel, Flu) at varying decoding temperatures
  $T\in\{0.2,0.4,0.6,0.8,1.0,1.2,1.4,1.6\}$ on Qwen3-32B.
  \SFR{} yields a better \SS{}--overall-quality trade-off across
  temperatures, with small metric-specific exceptions at high $T$
  where both methods degrade.}
  \label{fig:style-temp}
\end{figure}

\begin{table*}[t]
\centering\small
\setlength{\tabcolsep}{4pt}
\begin{tabular}{llcccccccc}
\toprule
& & \multicolumn{4}{c}{pass@$1$} & \multicolumn{4}{c}{pass@$5$}\\
\cmidrule(lr){3-6}\cmidrule(lr){7-10}
Budget & Method & all & easy & medium & hard & all & easy & medium & hard \\
\midrule
\multirow{2}{*}{$3$B}
 & SFT           & 44.8          & \textbf{92.3} & 51.0          & 13.9          & 58.2          & \textbf{97.2} & 75.1          & 24.4 \\
  & \textbf{\SFR} & \textbf{46.7} & 92.0          & \textbf{55.6} & \textbf{15.1} & \textbf{60.1} & 96.7          & \textbf{78.5} & \textbf{26.6} \\
\midrule
\multirow{2}{*}{$6$B}
 & SFT           & 46.5          & 92.9          & 53.8          & 15.4          & 60.7          & \textbf{97.8} & 77.3          & 28.1 \\
  & \textbf{\SFR} & \textbf{50.2} & \textbf{94.1} & \textbf{59.8} & \textbf{19.0} & \textbf{64.2} & 97.5          & \textbf{80.6} & \textbf{33.9} \\
\midrule
\multirow{2}{*}{$12$B}
 & SFT           & 52.2          & 94.9          & 62.6          & 21.0          & 65.7 & 97.5          & 84.5 & 34.7 \\
  & \textbf{\SFR} & \textbf{54.4} & \textbf{95.9} & \textbf{64.0} & \textbf{24.4} & \textbf{68.7}          & 97.5 & \textbf{86.7}          & \textbf{39.1} \\
\bottomrule
\end{tabular}
\caption{LiveCodeBench-v5 pass@$k$ at $T=0.8$ on
Qwen2.5-Coder-7B-Instruct ($10$ samples per problem), across
three matched training-token budgets ($3$B/$6$B/$12$B).}
\label{tab:lcb-scaling}
\end{table*}

\Cref{tab:style-scores} further shows the four LLM-judge metrics
broken down by representative styles and averaged at $T=0.8$.
\SFR{} improves over both
DeepSeek-R1-prompt and SFT on most styles in aggregate across the
four metrics, with the largest gain on style strength and context
coherence; the simultaneous lift on the average scores indicates
that the gain is genuinely representational rather than achieved
by sacrificing relevance. A random subsample was additionally scored by human annotators;
rankings agreed with the LLM-judge on all four dimensions.

Reading the two tables together with the temperature sweep, the
picture is consistent: \SFR{} simultaneously raises the four
quality scores of \Cref{tab:style-scores} and lowers the
cross-style sameness of \Cref{tab:style-cssb}, and this remains
true across the temperature sweep of \Cref{fig:style-temp}.
This is the regime in which \SFR{} is designed to help: at low
temperature SFT collapses to a near-shared template across
personas, while \SFR{} keeps the persona signal alive throughout
the response.

\paragraph{Judge robustness and public-data validation.}
A cross-family validation judge, \textbf{GLM-5.2}, re-scoring all
dialogue outputs alongside our primary judge (Qwen3-235B), rates
\SFR{} best on every dimension under both judges, with no
same-family bias detected in any \SFR{}-vs-SFT comparison
(\Cref{app:crossjudge}). To rule
out that the diversity gain is an artifact of the proprietary
dataset, we additionally fine-tune Qwen3-4B on the public
PersonaChat~\citep{personachat} dataset
(\Cref{app:personachat} for setup); \Cref{tab:personachat}
shows the same \CSSB{} reduction of \SFR{} over SFT reproduces
across all four $n$-gram orders.

\begin{table}[t]
\centering\small
\begin{tabular}{lcccc}
\toprule
Method & \CSSB$_1$ & \CSSB$_2$ & \CSSB$_3$ & \CSSB$_4$ \\
\midrule
SFT            & 0.653 & 0.507 & 0.391 & 0.299 \\
\textbf{\SFR{}} & \textbf{0.598} & \textbf{0.441} & \textbf{0.322} & \textbf{0.233} \\
\bottomrule
\end{tabular}
\caption{\CSSB{} on public PersonaChat data (Qwen3-4B). Lower is
better ($=$ larger inter-persona diversity).}
\label{tab:personachat}
\end{table}

\subsection{Generalization to open-ended code generation}\label{sec:exp-lcb}

The diversity-preserving mechanism of \SFR{} is not specific to
stylistic variation. It applies whenever the model must maintain
multiple valid generation modes under different conditions. Code
generation, where a single specification admits diverse correct
implementations, provides a natural stress test: here
``conditions'' are problem descriptions rather than style prompts,
and ``modes'' are functionally equivalent but structurally distinct
solutions.

Since the stylized-dialogue dataset is proprietary, we further validate on a public benchmark---LiveCodeBench-v5~\citep{lcb}. We fine-tune
\textbf{Qwen2.5-Coder-7B-Instruct}~\citep{qwencoder} on ${\sim}380$k chain-of-thought
samples cleaned from rStar-Code~\citep{rstarcode}
(\Cref{app:data-lcb}) and stop at three
matched training-token budgets: \textbf{$\mathbf{3}$B}, \textbf{$\mathbf{6}$B}, and
\textbf{$\mathbf{12}$B}, comparing SFT against \SFR{} under matched data,
optimizer, and step counts. Decoding uses temperature $T=0.8$; we
report pass@$1$ and pass@$5$ computed over $10$ independent
samples per problem, broken down by LiveCodeBench's official
difficulty splits (Easy, Medium, Hard) and overall (All).

The result is reported in \Cref{tab:lcb-scaling}. On the
overall (All) split, \SFR{} improves pass@$1$ over SFT at every
budget ($+1.9$, $+3.7$, $+2.2$ at $3$B/$6$B/$12$B), and improves
pass@$5$ consistently ($+1.9$ at $3$B, $+3.5$ at
$6$B, $+3.0$ at $12$B). The gain is most pronounced on the Medium and Hard
splits, where \SFR{} dominates SFT at every budget on pass@$1$
(e.g.\ Medium: $+4.6$, $+6.0$, $+1.4$; Hard: $+1.2$, $+3.6$,
$+3.4$); on the Easy split the two methods are comparable, which
is unsurprising because Easy problems leave little room for
multi-modal coverage to matter. The advantage holds at the
largest budget ($12$B) on both pass@$1$ and pass@$5$ across all
splits (e.g.\ pass@$5$ Medium: $86.7$ vs $84.5$; Hard: $39.1$ vs
$34.7$). The consistent pass@$1$ improvement is in line with the
reading of \SFR{} as enhancing conditional discrimination at the
\emph{single-sample} level: even when coverage is already high,
per-sample quality continues to benefit.

\subsection{Controlled comparison on MBPP}
\label{sec:exp-bridge}

This set of experiments compares \SFR{} against both SFT and
multi-token prediction under strictly matched conditions. On
\textbf{Qwen3-4B-Base}~\citep{qwen3} we train three objectives on ${\sim}500$k
quality-filtered samples from
OpenCodeInstruct~\citep{opencodeinstruct} (\Cref{app:data-mbpp}),
under matched optimizer and step counts: SFT, MTP with $K{=}4$
prediction heads, and \SFR{} encoding the next $k{=}32$ tokens
as its target (\Cref{eq:sfr-loss}), stride $1$. All three checkpoints are evaluated on MBPP
and on the harder MBPP$+$ split~\citep{mbpp,evalplus} at $T=0.6$;
we report pass@$1$, pass@$5$, and pass@$10$, each computed over
$16$ independent samples per problem.

\Cref{tab:bridge} gives the result. \SFR{} achieves the highest
scores on both splits across all three metrics, with a clear
margin over both SFT and MTP on pass@$5$ and pass@$10$
(e.g.\ MBPP$+$ pass@$10$: $79.0$ vs $77.3$ for MTP and $76.7$
for SFT; MBPP-Base pass@$5$: $87.4$ vs $85.8$ for MTP and $85.6$
for SFT). On pass@$1$, \SFR{} also leads on both splits
(MBPP-Base: $78.3$ vs $77.2$/$77.1$; MBPP$+$: $67.7$ vs
$67.2$/$67.0$), though the margin is narrower; the advantage
widens at higher $k$, consistent with the interpretation that
continuous future supervision enriches the backbone's coverage of
alternative solution paths in addition to single-shot accuracy.
Paired-bootstrap $95\%$ CIs on (method $-$ \SFR{}), computed over
problems on the Base split, confirm this pattern is not noise:
neither comparison is significant at pass@$1$ (all three methods
sit near a shared ceiling), but \SFR{}'s advantage becomes
significant against SFT already at pass@$5$, and against both SFT
and MTP at pass@$10$.

\begin{table}[t]
\centering\small
\begin{tabular}{llccc}
\toprule
Method & Split & pass@1 & pass@5 & pass@10 \\
\midrule
SFT       & Base  & 77.2 & 85.6 & 88.1 \\
MTP       & Base  & 77.1 & 85.8 & 88.1 \\
\textbf{\SFR{}} & Base & \textbf{78.3} & \textbf{87.4} & \textbf{89.2} \\
\midrule
SFT       & Plus  & 67 & 74.4 & 76.7 \\
MTP       & Plus  & 67.2 & 75.4 & 77.3 \\
\textbf{\SFR{}} & Plus & \textbf{67.7} & \textbf{76.8} & \textbf{79.0} \\
\bottomrule
\end{tabular}
\caption{MBPP and MBPP$+$ pass@$k$ at $T=0.6$ on Qwen3-4B-Base
($16$ samples per problem). \SFR{} leads on all metrics across
both splits.}
\label{tab:bridge}
\end{table}

Combined with the dialogue and the open-ended code-generation
results, the three experiments tell a consistent story: the
auxiliary flow-matching loss reshapes the backbone's hidden state
so that $h_t$ encodes future-aware, condition-discriminative
information, which manifests as cross-style distinctiveness in
dialogue and as higher pass@$k$ coverage in code generation.
Inference cost is identical to vanilla SFT in all three cases.

\section{Analysis}\label{sec:analysis}

We probe two properties of the trained model: how cleanly the
backbone hidden state separates by conditioning prompt
(\Cref{sec:analysis-tsne}), and how the discarded
flow-matching head, when re-attached \emph{offline}, can be used
to read out a token-level uncertainty signal that the autoregressive
head alone does not expose (\Cref{sec:analysis-vfd}).

\subsection{Hidden states separate by persona}
\label{sec:analysis-tsne}

For each method we collect
\(\{h_t^{q,s}: q\in Q_{\mathrm{eval}},\, s\in S,\, t = t_0\}\),
where $Q_{\mathrm{eval}}$ is the evaluation query set and $S$ the
set of nine personas,
at a fixed response position $t_0$ (the third response token in
the headline result), visualize the resulting vectors with t-SNE~\citep{tsne}
in \Cref{fig:tsne}, and quantify style separation with a
\emph{Style Separability Index},
\(\mathrm{SSI}=\E_{s}\,\mathrm{var}_{q}(h_t^{q,s})/\E_{q}\,\mathrm{var}_{s}(h_t^{q,s})\),
i.e.\ within-style scatter divided by between-style scatter
(lower is better). At $t_0{=}3$ the personas overlap
heavily under SFT and SSI is large ($0.8052$); under \SFR{} the
same styles separate cleanly and SSI drops to $0.5992$,
yielding a $\mathrm{SFT}/\mathrm{SFR}$ ratio of $1.34$. The
persona prompt has therefore been written into the backbone's
representation under \SFR{} and only weakly so under SFT.

\Cref{tab:ssi-by-t0} sweeps the response position $t_0$ and
shows the same trend: \SFR{}'s SSI is at or below SFT's at every
$t_0$ we examined, with the largest gap concentrated in the
$t_0\in[2,5]$ range, where the persona signal is most actively
shaping the early body of the reply; later positions converge as
both methods commit to a particular continuation.

\begin{table}[t]
\centering\small
\begin{tabular}{cccc}
\toprule
$t_0$ & SFT & \SFR{} & ratio \\
\midrule
1   & 0.9705 & \textbf{0.9589} & 1.0121 \\
2   & 0.6793 & \textbf{0.5434} & 1.2501 \\
3   & 0.8052 & \textbf{0.5992} & 1.3439 \\
5   & 0.7915 & \textbf{0.6943} & 1.1401 \\
10  & 1.0467 & \textbf{1.0029} & 1.0437 \\
20  & 1.0431 & \textbf{1.0234} & 1.0193 \\
40  & 1.1030 & \textbf{1.0962} & 1.0061 \\
\bottomrule
\end{tabular}
\caption{Style Separability Index (SSI, lower is better) of the
backbone hidden state at increasing response positions $t_0$.
Ratio is $\mathrm{SFT}/\mathrm{SFR}$; values above $1$ indicate
that \SFR{} produces better-separated persona groups.}
\label{tab:ssi-by-t0}
\end{table}

\begin{figure}[t]
  \centering
  \includegraphics[width=\linewidth]{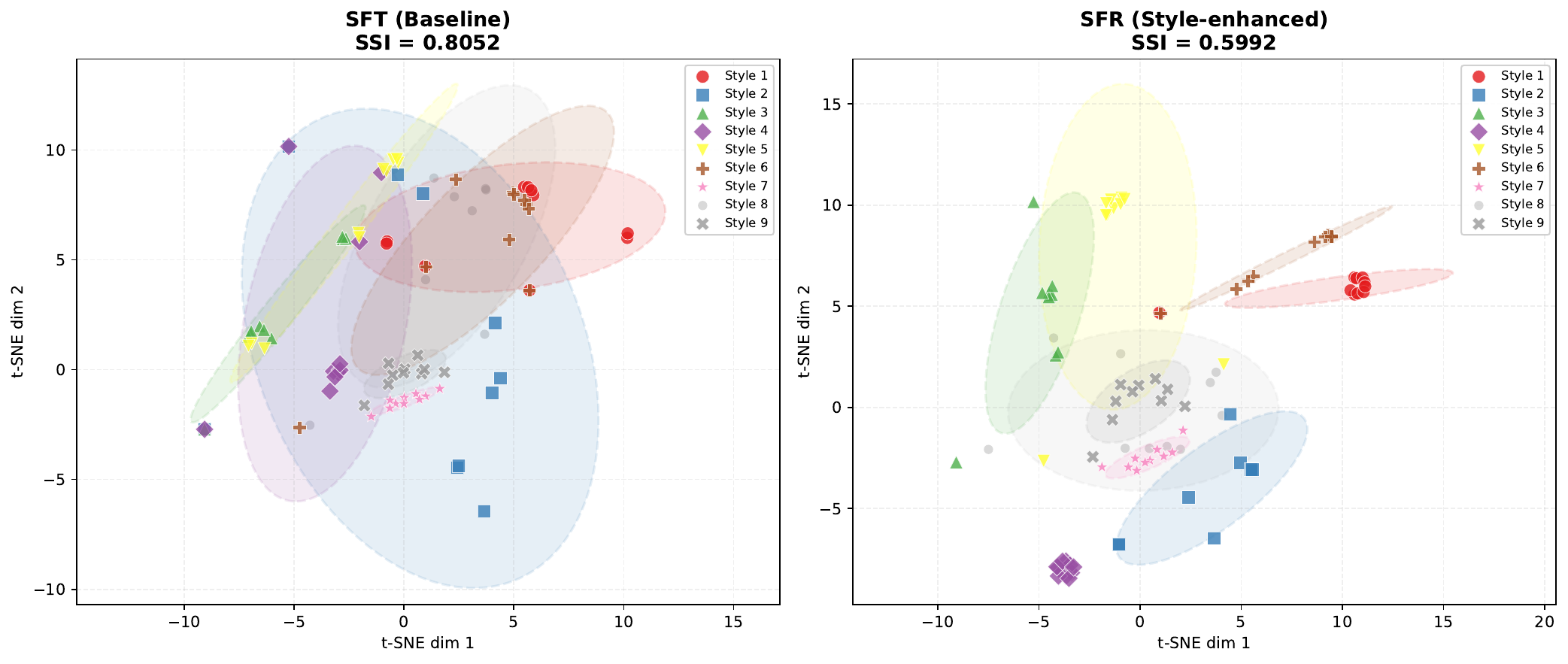}
  \caption{Hidden-state geometry under cross-persona conditioning.
  t-SNE of $h_t^{q,s}$ for ten queries $\times$ nine personas.
  Under SFT the per-persona styles mix across queries; under
  \SFR{} the styles show clearer persona-wise separation across
  the sampled queries.}
  \label{fig:tsne}
\end{figure}
\begin{table*}[t]
\centering\small
\label{tab:vfd-case}

\vspace{0.3em}
{\footnotesize\itshape Prompt:}\;
{\ttfamily\scriptsize Write Python code for quicksort.}

\vspace{0.3em}
{\footnotesize\itshape Completion:}\;
{\ttfamily\scriptsize \textasciigrave\textasciigrave\textasciigrave python \textbackslash ndef quicksort(arr):\textbackslash n\hspace{1em}"""\textbackslash n\hspace{1em}Sorts an array using the quicksort algorithm.\textbackslash n\textbackslash n\hspace{1em}\textcolor{red}{Parameters}:\textbackslash n\hspace{1em}arr (list): The list of elements to be sorted.\textbackslash n\textbackslash n\hspace{1em}\textcolor{red}{Returns}:\textbackslash n\hspace{1em}list: A new list containing all elements from the input list in sorted order.\textbackslash n\hspace{1em}"""\textbackslash n\hspace{1em}if len(arr) <= 1:\textbackslash n\hspace{2em}return arr\textbackslash n\hspace{1em}else:\textbackslash n\hspace{2em}pivot = \textcolor{red}{arr}[len(arr) // 2]\textbackslash n\hspace{2em}\textcolor{red}{left} = [x for x in arr if x < pivot]\textbackslash n\hspace{2em}middle = [x for x in arr if x == pivot]\textbackslash n\hspace{2em}\textcolor{red}{right} = [x for x in arr if x > pivot]\textbackslash n\hspace{2em}\textcolor{red}{return} quicksort(left) + middle + quicksort(right)\textbackslash n\textasciigrave\textasciigrave\textasciigrave}
\vspace{0.3em}

\begin{tabular}{lccccccc}
\toprule
Token  & Parameters & Returns & arr & left & right & return\\
\midrule
$\mathrm{SED}_t$  & 0.669 & 0.8239 & 0.7148 & 0.5951 & 0.4049 & 0.3873 \\
Label  & Fork & Fork & Fork & Lock & Lock & Lock \\
\bottomrule
\end{tabular}
\caption{Lock/Fork diagnosis via $\mathrm{SED}_t$ on a sampled
completion (excerpt). $\mathrm{SED}_t$ values are normalized to
$[0,1]$ by dividing by the per-sequence maximum; the Lock/Fork
label is assigned by thresholding at the per-sequence median.}
\end{table*}

\subsection{Semantic endpoint dispersion as a Lock/Fork diagnostic}
\label{sec:analysis-vfd}

Recent work~\citep{ssd2026} distinguishes \emph{Lock} positions
(one dominant continuation; precision-critical) from \emph{Fork}
positions (multiple valid continuations; exploration-critical),
but does not provide a mechanism to \emph{locate} which positions
are Locks or Forks---the distinction is implicit in the
self-distillation process.
\SFR{}'s
discarded FM head offers an explicit diagnostic: at position $t$ we fix
$h_t$, draw $N{=}64$ independent source samples $z_0^{(i)}\sim\mathrm{Unif}(\mathbb S^{d_z-1})$,
and integrate the learned ODE $\dot z_\tau = v_\phi(z_\tau,\tau;h_t)$
for $50$ Euler steps from $\tau{=}0$ to $\tau{=}1$, yielding
endpoints $\hat z_1^{(i)}$. We then compute the
\emph{semantic endpoint dispersion}
\begin{equation}
\mathrm{SED}_t = \mathrm{tr}\!\bigl[\mathrm{Cov}_{i}(\hat z_1^{(i)})\bigr],
\label{eq:vfd}
\end{equation}
which measures the spread of predicted future semantic endpoints.
Unlike softmax entropy, SED is intrinsic to the
learned flow field and independent of any decoding temperature.
Low SED $\Rightarrow$ Lock; high SED
$\Rightarrow$ Fork at segment level rather than single-token
level. \Cref{tab:vfd-case} shows a concrete example on a
quicksort completion. Tokens whose continuations are nearly
deterministic given the algorithm, such as the idiomatic variable names
\texttt{left}/\texttt{right} and the keyword
\texttt{return}, all receive low SED and are classified as Locks.
In contrast, \texttt{Parameters} and \texttt{Returns} introduce
free-form natural-language descriptions where multiple valid
phrasings exist, and \texttt{arr} appears at the pivot selection
line where the partitioning strategy is genuinely underdetermined;
all three exhibit high SED and are classified as Forks.
We note that the per-sequence median threshold is used here for
illustration; calibrating a task-specific threshold is left for
future work.

\section{Discussion}\label{sec:discussion}

\subsection{Why flow matching}\label{sec:discussion-fm}

Deterministic regression (MSE / cosine to the encoder target)
collapses to the conditional mean $\E[z_1\mid c]$
(\Cref{prop:modeavg}), which mode-averages across conditioning
prompts---precisely the pathology we set out to cure. The
randomness of $z_0$ in \eqref{eq:sfr-loss} is what makes the
optimum equal to the full conditional vector field
(\Cref{prop:fmmm}); swapping FM for MSE or cosine, while keeping
the same auxiliary head and target, erases most of
the gain (\Cref{tab:fm-vs-deterministic}, \Cref{app:ablation-fmvsdet}). FM's constant-magnitude target
and single-step regression are also far simpler to tune than
diffusion DSM as a small auxiliary loss, and the straight-line
formulation makes the following equivalence, which underlies the
MTP inclusion result \Cref{thm:mtp-equiv}, almost trivial to prove:

\begin{lemma}[Velocity-matching $\equiv$ endpoint-matching]\label{lem:vel-eq-end}
Restrict \SFR{} to the discrete, constant-velocity MTP-instance of
\Cref{def:mtp-fm} (\Cref{app:theory-mtp}): the head predicts a
constant velocity $v_{\phi,k}=q_{\phi,k}(\cdot\mid h_t)-\pi$ toward
a fixed reference $\pi$, with target velocity
$u_{\star,k}^{(t)}=e(y_{t+k})-\pi$. Then, for any divergence $D$ on
the tangent space induced by a divergence $D_\Delta$ on the
simplex via translation by $\pi$, matching $v_{\phi,k}$ to
$u_{\star,k}^{(t)}$ under $D$ is identical to matching the endpoint
prediction $q_{\phi,k}(\cdot\mid h_t)$ to the one-hot target
$e(y_{t+k})$ under $D_\Delta$.
\end{lemma}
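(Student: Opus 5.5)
The plan is to make precise what ``divergence on the tangent space induced by $D_\Delta$ via translation by $\pi$'' means, after which the lemma reduces to a change of variables. Let $T_\pi:\Delta\to T$ be the affine translation $p\mapsto p-\pi$. Its image is contained in the tangent space $T=\{u:\mathbf 1^\top u=0\}$, and on that image it has the inverse $u\mapsto u+\pi$. The induced divergence is, by definition,
\[
D(u,u') \;:=\; D_\Delta\bigl(u+\pi,\;u'+\pi\bigr),
\]
for $u,u'$ in the translated simplex $\Delta-\pi\subset T$. This is the only definition consistent with the phrase in the statement, so I will fix it first. I will also note that $D$ inherits nonnegativity, and vanishing exactly on the diagonal, from $D_\Delta$.

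Second, I substitute the two velocities from \Cref{def:mtp-fm}:
\[
D\bigl(v_{\phi,k},\,u_{\star,k}^{(t)}\bigr) \;=\; D_\Delta\bigl(q_{\phi,k}(\cdot\mid h_t)-\pi+\pi,\;e(y_{t+k})-\pi+\pi\bigr) \;=\; D_\Delta\bigl(q_{\phi,k}(\cdot\mid h_t),\,e(y_{t+k})\bigr).
\]
The argument order must follow whatever convention \Cref{def:mtp-fm} uses, for instance target-first for cross-entropy or KL. Translation preserves that order, so this needs no extra work. Both arguments lie in $\Delta-\pi$, because $q_{\phi,k}$ is a softmax output and $e(y_{t+k})$ is a vertex, so the definition of $D$ applies.

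Third, I lift the pointwise identity to the objective. The losses are expectations over positions $t$, offsets $k$, and the data. Since the integrands agree pointwise for every parameter $\phi$ (and every $\theta$ through $h_t$), the two objectives are the same function of $(\theta,\phi)$. They therefore have the same values, the same gradients and the same minimizers. This is the sense in which the two matchings are ``identical''.

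I also want to remark why the time variable $\tau$ plays no role. Along the straight-line path of \eqref{eq:sfr-loss} with constant velocity, $z_\tau=\pi+\tau\,(z_1-\pi)$, so the velocity is independent of $\tau$. Hence an expectation over $\tau\sim\mathcal U[0,1]$ of a $\tau$-free integrand collapses to the integrand itself.

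The main obstacle is not computational. It is pinning down \Cref{def:mtp-fm} so that both $v_{\phi,k}$ and $u_{\star,k}^{(t)}$ genuinely lie in the domain where $D$ is induced. In particular, the source must be the fixed $\pi$ rather than a random $z_0$, otherwise the constant-velocity reduction fails. I would state that restriction explicitly at the start of the proof.
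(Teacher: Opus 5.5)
Your proposal is correct and is essentially the paper's proof: define $D(u\|v)=D_\Delta(u+\pi\|v+\pi)$, observe that $q_{\phi,k}-\pi$ and $e(y_{t+k})-\pi$ lie in the translated simplex $\Delta-\pi\subset T\Delta$, and substitute to obtain $D_\Delta\bigl(q_{\phi,k}(\cdot\mid h_t)\,\big\|\,e(y_{t+k})\bigr)$. Two of your choices are slightly more careful than the paper's version. You restrict $D$ to $\Delta-\pi$ rather than all of $T\Delta$; the paper writes the identity on all of $T\Delta$, where $u+\pi$ may leave $\Delta$. You also note that translation preserves argument order, which matters for the target-first KL case that the paper handles in a separate paragraph.
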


Taking $D_\Delta$ to be the KL divergence recovers exactly the MTP
cross-entropy loss (\Cref{thm:mtp-equiv}); the full
discussion, including why CFM rather than diffusion, is deferred to \Cref{app:cfm-vs-diffusion,app:theory-mtp}.

\subsection{Scope}\label{sec:discussion-scope}

\SFR{} does not change the inference graph: the deployed
checkpoint has the same architecture as a vanilla AR LM and runs on any
existing serving stack. It also does not verify factual consistency
of responses against the persona specification---that is an
orthogonal fact-checking problem. As \Cref{fig:style-temp}
shows, \SFR{}'s practical edge is largest in the low-temperature
regime that production systems prefer; at $T\ge 1.0$ sampling
randomness dominates and the two methods converge. The formal
relationship between \SFR{} and multi-token prediction is
detailed in \Cref{app:theory}.

\section{Conclusion}\label{sec:conclusion}

We identified Cross-Style Collapse, the failure of fine-tuned LLMs to
produce diverse outputs under varying conditioning, and proposed
Semantic Flow Regularization (\SFR{}), a conditional flow-matching
auxiliary loss whose head is discarded at inference for zero deployment
cost. On industrial dialogue (Qwen3-32B, 9 personas), \SFR{} improves
cross-style diversity while raising quality. Public validation on
LiveCodeBench-v5 and MBPP confirms generality: \SFR{} consistently
lifts pass@$k$ and theoretically subsumes Multi-Token Prediction as a
special case. As a byproduct, the offline FM head yields a
segment-level Lock/Fork uncertainty probe beyond the AR softmax.
Extending \SFR{} to pretraining and RLHF is future work.

\section*{Limitations}\label{sec:limitations}

\textbf{Hyperparameter sensitivity.}\quad
\SFR{} introduces several hyperparameters beyond vanilla SFT:
the loss weight $\lambda$, suffix length $k$, stride $s$, warmup
schedule, and EMA decay, as well as the choice of sentence
encoder $E$ itself. The ablations in \Cref{app:ablation}
demonstrate broad robustness across reasonable ranges, but finding
an optimal configuration still requires non-trivial grid search;
no single default set works universally across all backbones and
tasks.

\textbf{Training dynamics of the FM head.}\quad
The flow-matching loss interacts with the main CE loss in subtle
ways. If $\lambda$ is set too high or the FM head warms up too
quickly, it can ``short-circuit'' the backbone by dominating the
gradient signal before the AR head has converged, degrading
next-token accuracy. Conversely, insufficient warmup leads to
large initial FM errors that propagate noisy gradients into the
backbone. In practice we find it necessary to monitor the FM loss
curve throughout training and adjust the warmup length
accordingly---an extra operational burden absent in standard SFT.

\textbf{Style data coverage and distribution.}\quad
Our experiments use a fixed set of nine hand-curated personas with
roughly balanced data. When the number of styles grows
substantially (e.g.\ hundreds of personas) or the per-style data
distribution is highly imbalanced, minority styles may receive
insufficient gradient signal from the flow-matching objective,
causing the backbone to under-represent them. Whether curriculum
sampling or per-style reweighting can mitigate this is left to
future work. More broadly, our stylized benchmark does not cover
continuous-spectrum styles or cross-language stylization.

\textbf{Scaling to larger backbones.}\quad
Our largest backbone is 32B parameters. Online encoding of suffix
targets adds roughly $10\%$ to training wall-clock time at stride
$4$ (and proportionally less at larger strides), which is the
explicit cost paid for zero inference overhead. Whether the
regularization effect strengthens or weakens at $70$B$+$ scales,
and whether the additional memory footprint of the FM head and
encoder becomes a practical bottleneck at such scales, remain open
questions.

\textbf{Scope of the factuality evidence.}\quad
Our stylized-dialogue task is a \emph{grounded} setting: the
model's input always includes the relevant source article, so a
high context-coherence score is itself a partial indicator that
diversity gains are not coming at the expense of drifting away
from the given context (\Cref{app:rebuttal-extra}). We have not
evaluated \SFR{} in a fully \emph{open-ended} setting where the
model must generate factual content from its own parametric
knowledge without any supporting context (e.g.\ a TruthfulQA-style
benchmark). Our current evidence on the diversity--factuality
trade-off is therefore limited to grounded generation, and
verifying it in open-ended settings is left to future work.

\textbf{Potential risks.}\quad
By explicitly encouraging output diversity, \SFR{} may amplify
the generation of undesirable content if the training data contains
harmful, biased, or toxic material---the method does not
distinguish between ``diverse but safe'' and ``diverse but harmful''
continuations. Standard content-filtering and alignment safeguards
(e.g.\ RLHF, safety classifiers) remain necessary downstream.
Additionally, improved style-mimicking ability could facilitate
misuse in impersonation or disinformation scenarios; deployers
should apply appropriate access controls and monitoring.

% ---- References ----------------------------------------------------
% \bibliographystyle{acl_natbib}
\bibliography{references}

@misc{qwen3,
      title={Qwen3 Technical Report}, 
      author={An Yang and Anfeng Li and Baosong Yang and Beichen Zhang and Binyuan Hui and Bo Zheng and Bowen Yu and Chang Gao and Chengen Huang and Chenxu Lv and Chujie Zheng and Dayiheng Liu and Fan Zhou and Fei Huang and Feng Hu and Hao Ge and Haoran Wei and Huan Lin and Jialong Tang and Jian Yang and Jianhong Tu and Jianwei Zhang and Jianxin Yang and Jiaxi Yang and Jing Zhou and Jingren Zhou and Junyang Lin and Kai Dang and Keqin Bao and Kexin Yang and Le Yu and Lianghao Deng and Mei Li and Mingfeng Xue and Mingze Li and Pei Zhang and Peng Wang and Qin Zhu and Rui Men and Ruize Gao and Shixuan Liu and Shuang Luo and Tianhao Li and Tianyi Tang and Wenbiao Yin and Xingzhang Ren and Xinyu Wang and Xinyu Zhang and Xuancheng Ren and Yang Fan and Yang Su and Yichang Zhang and Yinger Zhang and Yu Wan and Yuqiong Liu and Zekun Wang and Zeyu Cui and Zhenru Zhang and Zhipeng Zhou and Zihan Qiu},
      year={2025},
      eprint={2505.09388},
      archivePrefix={arXiv},
      primaryClass={cs.CL},
      url={https://arxiv.org/abs/2505.09388}, 
}

@inproceedings{li2015diversity,
  author       = {Jiwei Li and
                  Michel Galley and
                  Chris Brockett and
                  Jianfeng Gao and
                  Bill Dolan},
  editor       = {Kevin Knight and
                  Ani Nenkova and
                  Owen Rambow},
  title        = {A Diversity-Promoting Objective Function for Neural Conversation Models},
  booktitle    = {{NAACL} {HLT} 2016, The 2016 Conference of the North American Chapter
                  of the Association for Computational Linguistics: Human Language Technologies,
                  San Diego California, USA, June 12-17, 2016},
  pages        = {110--119},
  publisher    = {The Association for Computational Linguistics},
  year         = {2016},
  url          = {https://doi.org/10.18653/v1/n16-1014},
  doi          = {10.18653/V1/N16-1014},
  bibsource    = {dblp computer science bibliography, https://dblp.org}
}

@inproceedings{vhred,
  author       = {Iulian Vlad Serban and
                  Alessandro Sordoni and
                  Ryan Lowe and
                  Laurent Charlin and
                  Joelle Pineau and
                  Aaron C. Courville and
                  Yoshua Bengio},
  editor       = {Satinder Singh and
                  Shaul Markovitch},
  title        = {A Hierarchical Latent Variable Encoder-Decoder Model for Generating
                  Dialogues},
  booktitle    = {Proceedings of the Thirty-First {AAAI} Conference on Artificial Intelligence,
                  February 4-9, 2017, San Francisco, California, {USA}},
  pages        = {3295--3301},
  publisher    = {{AAAI} Press},
  year         = {2017},
  url          = {https://doi.org/10.1609/aaai.v31i1.10983},
  doi          = {10.1609/AAAI.V31I1.10983},
  bibsource    = {dblp computer science bibliography, https://dblp.org}
}

@inproceedings{cvaedialog,
  author       = {Tiancheng Zhao and
                  Ran Zhao and
                  Maxine Esk{\'{e}}nazi},
  editor       = {Regina Barzilay and
                  Min{-}Yen Kan},
  title        = {Learning Discourse-level Diversity for Neural Dialog Models using
                  Conditional Variational Autoencoders},
  booktitle    = {Proceedings of the 55th Annual Meeting of the Association for Computational
                  Linguistics, {ACL} 2017, Vancouver, Canada, July 30 - August 4, Volume
                  1: Long Papers},
  pages        = {654--664},
  publisher    = {Association for Computational Linguistics},
  year         = {2017},
  url          = {https://doi.org/10.18653/v1/P17-1061},
  doi          = {10.18653/V1/P17-1061},
  bibsource    = {dblp computer science bibliography, https://dblp.org}
}

@inproceedings{stylefusion,
  author       = {Xiang Gao and
                  Yizhe Zhang and
                  Sungjin Lee and
                  Michel Galley and
                  Chris Brockett and
                  Jianfeng Gao and
                  Bill Dolan},
  editor       = {Kentaro Inui and
                  Jing Jiang and
                  Vincent Ng and
                  Xiaojun Wan},
  title        = {Structuring Latent Spaces for Stylized Response Generation},
  booktitle    = {Proceedings of the 2019 Conference on Empirical Methods in Natural
                  Language Processing and the 9th International Joint Conference on
                  Natural Language Processing, {EMNLP-IJCNLP} 2019, Hong Kong, China,
                  November 3-7, 2019},
  pages        = {1814--1823},
  publisher    = {Association for Computational Linguistics},
  year         = {2019},
  url          = {https://doi.org/10.18653/v1/D19-1190},
  doi          = {10.18653/V1/D19-1190},
  bibsource    = {dblp computer science bibliography, https://dblp.org}
}

@misc{chameleonlimit,
  author       = {Yunze Xiao and
                  Vivienne J. Zhang and
                  Chenghao Yang and
                  Ningshan Ma and
                  Weihao Xuan and
                  Jen{-}Tse Huang},
  title        = {The Chameleon's Limit: Investigating Persona Collapse and Homogenization
                  in Large Language Models},
  journal      = {CoRR},
  volume       = {abs/2604.24698},
  year         = {2026},
  url          = {https://doi.org/10.48550/arXiv.2604.24698},
  doi          = {10.48550/ARXIV.2604.24698},
  eprinttype   = {arXiv},
  eprint       = {2604.24698},
  bibsource    = {dblp computer science bibliography, https://dblp.org}
}

@article{opencodeinstruct,
  author       = {Wasi Uddin Ahmad and
                  Aleksander Ficek and
                  Mehrzad Samadi and
                  Jocelyn Huang and
                  Vahid Noroozi and
                  Somshubra Majumdar and
                  Boris Ginsburg},
  title        = {OpenCodeInstruct: {A} Large-scale Instruction Tuning Dataset for Code
                  LLMs},
  journal      = {CoRR},
  volume       = {abs/2504.04030},
  year         = {2025},
  url          = {https://doi.org/10.48550/arXiv.2504.04030},
  doi          = {10.48550/ARXIV.2504.04030},
  eprinttype   = {arXiv},
  eprint       = {2504.04030},
  bibsource    = {dblp computer science bibliography, https://dblp.org}
}

@inproceedings{ostheimer2024,
    title = "Text Style Transfer Evaluation Using Large Language Models",
    author = "Ostheimer, Phil  and
      Nagda, Mayank  and
      Kloft, Marius  and
      Fellenz, Sophie",
    editor = "Calzolari, Nicoletta  and
      Kan, Min-Yen  and
      Hoste, Veronique  and
      Lenci, Alessandro  and
      Sakti, Sakriani  and
      Xue, Nianwen",
    booktitle = "Proceedings of the 2024 Joint International Conference on Computational Linguistics, Language Resources and Evaluation (LREC-COLING 2024)",
    month = may,
    year = "2024",
    address = "Torino, Italia",
    publisher = "ELRA and ICCL",
    url = "https://aclanthology.org/2024.lrec-main.1373/",
    pages = "15802--15822",
}

@article{qwencoder,
  author       = {Binyuan Hui and
                  Jian Yang and
                  Zeyu Cui and
                  Jiaxi Yang and
                  Dayiheng Liu and
                  Lei Zhang and
                  Tianyu Liu and
                  Jiajun Zhang and
                  Bowen Yu and
                  Kai Dang and
                  An Yang and
                  Rui Men and
                  Fei Huang and
                  Xingzhang Ren and
                  Xuancheng Ren and
                  Jingren Zhou and
                  Junyang Lin},
  title        = {Qwen2.5-Coder Technical Report},
  journal      = {CoRR},
  volume       = {abs/2409.12186},
  year         = {2024},
  url          = {https://doi.org/10.48550/arXiv.2409.12186},
  doi          = {10.48550/ARXIV.2409.12186},
  eprinttype   = {arXiv},
  eprint       = {2409.12186},
  bibsource    = {dblp computer science bibliography, https://dblp.org}
}

@inproceedings{gloeckle2024better,
  author       = {Fabian Gloeckle and
                  Badr Youbi Idrissi and
                  Baptiste Rozi{\`{e}}re and
                  David Lopez{-}Paz and
                  Gabriel Synnaeve},
  editor       = {Ruslan Salakhutdinov and
                  Zico Kolter and
                  Katherine A. Heller and
                  Adrian Weller and
                  Nuria Oliver and
                  Jonathan Scarlett and
                  Felix Berkenkamp},
  title        = {Better {\&} Faster Large Language Models via Multi-token Prediction},
  booktitle    = {Forty-first International Conference on Machine Learning, {ICML} 2024,
                  Vienna, Austria, July 21-27, 2024},
  series       = {Proceedings of Machine Learning Research},
  pages        = {15706--15734},
  publisher    = {{PMLR} / OpenReview.net},
  year         = {2024},
  url          = {https://proceedings.mlr.press/v235/gloeckle24a.html},
  bibsource    = {dblp computer science bibliography, https://dblp.org}
}

@misc{deepseekv3,
      title={DeepSeek-V3 Technical Report}, 
      author={DeepSeek-AI and Aixin Liu and Bei Feng and Bing Xue and Bingxuan Wang and Bochao Wu and Chengda Lu and Chenggang Zhao and Chengqi Deng and Chenyu Zhang and Chong Ruan and Damai Dai and Daya Guo and Dejian Yang and Deli Chen and Dongjie Ji and Erhang Li and Fangyun Lin and Fucong Dai and Fuli Luo and Guangbo Hao and Guanting Chen and Guowei Li and H. Zhang and Han Bao and Hanwei Xu and Haocheng Wang and Haowei Zhang and Honghui Ding and Huajian Xin and Huazuo Gao and Hui Li and Hui Qu and J. L. Cai and Jian Liang and Jianzhong Guo and Jiaqi Ni and Jiashi Li and Jiawei Wang and Jin Chen and Jingchang Chen and Jingyang Yuan and Junjie Qiu and Junlong Li and Junxiao Song and Kai Dong and Kai Hu and Kaige Gao and Kang Guan and Kexin Huang and Kuai Yu and Lean Wang and Lecong Zhang and Lei Xu and Leyi Xia and Liang Zhao and Litong Wang and Liyue Zhang and Meng Li and Miaojun Wang and Mingchuan Zhang and Minghua Zhang and Minghui Tang and Mingming Li and Ning Tian and Panpan Huang and Peiyi Wang and Peng Zhang and Qiancheng Wang and Qihao Zhu and Qinyu Chen and Qiushi Du and R. J. Chen and R. L. Jin and Ruiqi Ge and Ruisong Zhang and Ruizhe Pan and Runji Wang and Runxin Xu and Ruoyu Zhang and Ruyi Chen and S. S. Li and Shanghao Lu and Shangyan Zhou and Shanhuang Chen and Shaoqing Wu and Shengfeng Ye and Shengfeng Ye and Shirong Ma and Shiyu Wang and Shuang Zhou and Shuiping Yu and Shunfeng Zhou and Shuting Pan and T. Wang and Tao Yun and Tian Pei and Tianyu Sun and W. L. Xiao and Wangding Zeng and Wanjia Zhao and Wei An and Wen Liu and Wenfeng Liang and Wenjun Gao and Wenqin Yu and Wentao Zhang and X. Q. Li and Xiangyue Jin and Xianzu Wang and Xiao Bi and Xiaodong Liu and Xiaohan Wang and Xiaojin Shen and Xiaokang Chen and Xiaokang Zhang and Xiaosha Chen and Xiaotao Nie and Xiaowen Sun and Xiaoxiang Wang and Xin Cheng and Xin Liu and Xin Xie and Xingchao Liu and Xingkai Yu and Xinnan Song and Xinxia Shan and Xinyi Zhou and Xinyu Yang and Xinyuan Li and Xuecheng Su and Xuheng Lin and Y. K. Li and Y. Q. Wang and Y. X. Wei and Y. X. Zhu and Yang Zhang and Yanhong Xu and Yanhong Xu and Yanping Huang and Yao Li and Yao Zhao and Yaofeng Sun and Yaohui Li and Yaohui Wang and Yi Yu and Yi Zheng and Yichao Zhang and Yifan Shi and Yiliang Xiong and Ying He and Ying Tang and Yishi Piao and Yisong Wang and Yixuan Tan and Yiyang Ma and Yiyuan Liu and Yongqiang Guo and Yu Wu and Yuan Ou and Yuchen Zhu and Yuduan Wang and Yue Gong and Yuheng Zou and Yujia He and Yukun Zha and Yunfan Xiong and Yunxian Ma and Yuting Yan and Yuxiang Luo and Yuxiang You and Yuxuan Liu and Yuyang Zhou and Z. F. Wu and Z. Z. Ren and Zehui Ren and Zhangli Sha and Zhe Fu and Zhean Xu and Zhen Huang and Zhen Zhang and Zhenda Xie and Zhengyan Zhang and Zhewen Hao and Zhibin Gou and Zhicheng Ma and Zhigang Yan and Zhihong Shao and Zhipeng Xu and Zhiyu Wu and Zhongyu Zhang and Zhuoshu Li and Zihui Gu and Zijia Zhu and Zijun Liu and Zilin Li and Ziwei Xie and Ziyang Song and Ziyi Gao and Zizheng Pan},
      year={2025},
      eprint={2412.19437},
      archivePrefix={arXiv},
      primaryClass={cs.CL},
      url={https://arxiv.org/abs/2412.19437}, 
}

@inproceedings{li2022diffusionlm,
  author       = {Xiang Lisa Li and
                  John Thickstun and
                  Ishaan Gulrajani and
                  Percy Liang and
                  Tatsunori B. Hashimoto},
  editor       = {Sanmi Koyejo and
                  S. Mohamed and
                  A. Agarwal and
                  Danielle Belgrave and
                  K. Cho and
                  A. Oh},
  title        = {Diffusion-LM Improves Controllable Text Generation},
  booktitle    = {Advances in Neural Information Processing Systems 35: Annual Conference
                  on Neural Information Processing Systems 2022, NeurIPS 2022, New Orleans,
                  LA, USA, November 28 - December 9, 2022},
  year         = {2022},
  url          = {http://papers.nips.cc/paper\_files/paper/2022/hash/1be5bc25d50895ee656b8c2d9eb89d6a-Abstract-Conference.html},
  bibsource    = {dblp computer science bibliography, https://dblp.org}
}

@inproceedings{sedd,
  author       = {Aaron Lou and
                  Chenlin Meng and
                  Stefano Ermon},
  editor       = {Ruslan Salakhutdinov and
                  Zico Kolter and
                  Katherine A. Heller and
                  Adrian Weller and
                  Nuria Oliver and
                  Jonathan Scarlett and
                  Felix Berkenkamp},
  title        = {Discrete Diffusion Modeling by Estimating the Ratios of the Data Distribution},
  booktitle    = {Forty-first International Conference on Machine Learning, {ICML} 2024,
                  Vienna, Austria, July 21-27, 2024},
  series       = {Proceedings of Machine Learning Research},
  pages        = {32819--32848},
  publisher    = {{PMLR} / OpenReview.net},
  year         = {2024},
  url          = {https://proceedings.mlr.press/v235/lou24a.html},
  bibsource    = {dblp computer science bibliography, https://dblp.org}
}

@inproceedings{plaid,
  author       = {Ishaan Gulrajani and
                  Tatsunori B. Hashimoto},
  editor       = {Alice Oh and
                  Tristan Naumann and
                  Amir Globerson and
                  Kate Saenko and
                  Moritz Hardt and
                  Sergey Levine},
  title        = {Likelihood-Based Diffusion Language Models},
  booktitle    = {Advances in Neural Information Processing Systems 36: Annual Conference
                  on Neural Information Processing Systems 2023, NeurIPS 2023, New Orleans,
                  LA, USA, December 10 - 16, 2023},
  year         = {2023},
  url          = {http://papers.nips.cc/paper\_files/paper/2023/hash/35b5c175e139bff5f22a5361270fce87-Abstract-Conference.html},
  bibsource    = {dblp computer science bibliography, https://dblp.org}
}

@article{rfflow,
  author       = {Qiang Liu},
  title        = {Rectified Flow: {A} Marginal Preserving Approach to Optimal Transport},
  journal      = {CoRR},
  volume       = {abs/2209.14577},
  year         = {2022},
  url          = {https://doi.org/10.48550/arXiv.2209.14577},
  doi          = {10.48550/ARXIV.2209.14577},
  eprinttype   = {arXiv},
  eprint       = {2209.14577},
  bibsource    = {dblp computer science bibliography, https://dblp.org}
}

@inproceedings{lipman2023flow,
  author       = {Yaron Lipman and
                  Ricky T. Q. Chen and
                  Heli Ben{-}Hamu and
                  Maximilian Nickel and
                  Matthew Le},
  title        = {Flow Matching for Generative Modeling},
  booktitle    = {The Eleventh International Conference on Learning Representations,
                  {ICLR} 2023, Kigali, Rwanda, May 1-5, 2023},
  publisher    = {OpenReview.net},
  year         = {2023},
  url          = {https://openreview.net/forum?id=PqvMRDCJT9t},
  bibsource    = {dblp computer science bibliography, https://dblp.org}
}

@inproceedings{liu2023flow,
  author       = {Xingchao Liu and
                  Chengyue Gong and
                  Qiang Liu},
  title        = {Flow Straight and Fast: Learning to Generate and Transfer Data with
                  Rectified Flow},
  booktitle    = {The Eleventh International Conference on Learning Representations,
                  {ICLR} 2023, Kigali, Rwanda, May 1-5, 2023},
  publisher    = {OpenReview.net},
  year         = {2023},
  url          = {https://openreview.net/forum?id=XVjTT1nw5z},
  bibsource    = {dblp computer science bibliography, https://dblp.org}
}

@article{film,
  author       = {{LCM team} and
                  Lo{\"{\i}}c Barrault and
                  Paul{-}Ambroise Duquenne and
                  Maha Elbayad and
                  Artyom Kozhevnikov and
                  Belen Alastruey and
                  Pierre Andrews and
                  Mariano Coria and
                  Guillaume Couairon and
                  Marta R. Costa{-}juss{\`{a}} and
                  David Dale and
                  Hady Elsahar and
                  Kevin Heffernan and
                  Jo{\~{a}}o Maria Janeiro and
                  Tuan Tran and
                  Christophe Ropers and
                  Eduardo S{\'{a}}nchez and
                  Robin San Roman and
                  Alexandre Mourachko and
                  Safiyyah Saleem and
                  Holger Schwenk},
  title        = {Large Concept Models: Language Modeling in a Sentence Representation
                  Space},
  journal      = {CoRR},
  volume       = {abs/2412.08821},
  year         = {2024},
  url          = {https://doi.org/10.48550/arXiv.2412.08821},
  doi          = {10.48550/ARXIV.2412.08821},
  eprinttype   = {arXiv},
  eprint       = {2412.08821},
  bibsource    = {dblp computer science bibliography, https://dblp.org}
}

@article{vllm,
  author       = {Woosuk Kwon and
                  Zhuohan Li and
                  Siyuan Zhuang and
                  Ying Sheng and
                  Lianmin Zheng and
                  Cody Hao Yu and
                  Joseph Gonzalez and
                  Hao Zhang and
                  Ion Stoica},
  editor       = {Jason Flinn and
                  Margo I. Seltzer and
                  Peter Druschel and
                  Antoine Kaufmann and
                  Jonathan Mace},
  title        = {Efficient Memory Management for Large Language Model Serving with
                  {PagedAttention}},
  booktitle    = {Proceedings of the 29th Symposium on Operating Systems Principles,
                  {SOSP} 2023, Koblenz, Germany, October 23-26, 2023},
  pages        = {611--626},
  publisher    = {{ACM}},
  year         = {2023},
  url          = {https://doi.org/10.1145/3600006.3613165},
  doi          = {10.1145/3600006.3613165},
  bibsource    = {dblp computer science bibliography, https://dblp.org}
}

@inproceedings{lcb,
  author       = {Naman Jain and
                  King Han and
                  Alex Gu and
                  Wen{-}Ding Li and
                  Fanjia Yan and
                  Tianjun Zhang and
                  Sida Wang and
                  Armando Solar{-}Lezama and
                  Koushik Sen and
                  Ion Stoica},
  title        = {{LiveCodeBench}: Holistic and Contamination Free Evaluation of Large
                  Language Models for Code},
  booktitle    = {The Thirteenth International Conference on Learning Representations,
                  {ICLR} 2025, Singapore, April 24-28, 2025},
  publisher    = {OpenReview.net},
  year         = {2025},
  url          = {https://openreview.net/forum?id=chfJJYC3iL},
  bibsource    = {dblp computer science bibliography, https://dblp.org}
}

@article{mbpp,
  author       = {Jacob Austin and
                  Augustus Odena and
                  Maxwell I. Nye and
                  Maarten Bosma and
                  Henryk Michalewski and
                  David Dohan and
                  Ellen Jiang and
                  Carrie J. Cai and
                  Michael Terry and
                  Quoc V. Le and
                  Charles Sutton},
  title        = {Program Synthesis with Large Language Models},
  journal      = {CoRR},
  volume       = {abs/2108.07732},
  year         = {2021},
  url          = {https://arxiv.org/abs/2108.07732},
  eprinttype   = {arXiv},
  eprint       = {2108.07732},
  bibsource    = {dblp computer science bibliography, https://dblp.org}
}

@article{bge,
  author       = {Shitao Xiao and
                  Zheng Liu and
                  Peitian Zhang and
                  Niklas Muennighoff and
                  Defu Lian and
                  Jian{-}Yun Nie},
  editor       = {Grace Hui Yang and
                  Hongning Wang and
                  Sam Han and
                  Claudia Hauff and
                  Guido Zuccon and
                  Yi Zhang},
  title        = {C-Pack: Packed Resources For General Chinese Embeddings},
  booktitle    = {Proceedings of the 47th International {ACM} {SIGIR} Conference on
                  Research and Development in Information Retrieval, {SIGIR} 2024, Washington
                  DC, USA, July 14-18, 2024},
  pages        = {641--649},
  publisher    = {{ACM}},
  year         = {2024},
  url          = {https://doi.org/10.1145/3626772.3657878},
  doi          = {10.1145/3626772.3657878},
  bibsource    = {dblp computer science bibliography, https://dblp.org}
}

@article{gneiting2007strictly,
  title={Strictly Proper Scoring Rules, Prediction, and Estimation},
  author={Gneiting, Tilmann and Raftery, Adrian E.},
  journal={Journal of the American Statistical Association},
  volume={102},
  number={477},
  pages={359--378},
  year={2007}
}

@inproceedings{convai2,
  author    = {Dinan, Emily and Logacheva, Varvara and Malykh, Valentin and Miller, Alexander and Shuster, Kurt and Urbanek, Jack and Kiela, Douwe and Szlam, Arthur and Serban, Iulian and Lowe, Ryan and Prabhumoye, Shrimai and Black, Alan W. and Rudnicky, Alexander and Williams, Jason and Pineau, Joelle and Burtsev, Mikhail and Weston, Jason},
  title     = {The Second Conversational Intelligence Challenge (ConvAI2)},
  booktitle = {The NeurIPS '18 Competition},
  editor    = {Escalera, Sergio and Herbrich, Ralf},
  series    = {The Springer Series on Challenges in Machine Learning},
  pages     = {187--208},
  publisher = {Springer},
  address   = {Cham},
  year      = {2020},
  doi       = {10.1007/978-3-030-29135-8_7},
  url       = {https://doi.org/10.1007/978-3-030-29135-8_7}
}

@misc{personachat,
  author       = {Saizheng Zhang and
                  Emily Dinan and
                  Jack Urbanek and
                  Arthur Szlam and
                  Douwe Kiela and
                  Jason Weston},
  editor       = {Iryna Gurevych and
                  Yusuke Miyao},
  title        = {Personalizing Dialogue Agents: {I} have a dog, do you have pets too?},
  booktitle    = {Proceedings of the 56th Annual Meeting of the Association for Computational
                  Linguistics, {ACL} 2018, Melbourne, Australia, July 15-20, 2018, Volume
                  1: Long Papers},
  pages        = {2204--2213},
  publisher    = {Association for Computational Linguistics},
  year         = {2018},
  url          = {https://aclanthology.org/P18-1205/},
  doi          = {10.18653/V1/P18-1205},
  bibsource    = {dblp computer science bibliography, https://dblp.org}
}

@inproceedings{light,
  author       = {Jack Urbanek and
                  Angela Fan and
                  Siddharth Karamcheti and
                  Saachi Jain and
                  Samuel Humeau and
                  Emily Dinan and
                  Tim Rockt{\"{a}}schel and
                  Douwe Kiela and
                  Arthur Szlam and
                  Jason Weston},
  editor       = {Kentaro Inui and
                  Jing Jiang and
                  Vincent Ng and
                  Xiaojun Wan},
  title        = {Learning to Speak and Act in a Fantasy Text Adventure Game},
  booktitle    = {Proceedings of the 2019 Conference on Empirical Methods in Natural
                  Language Processing and the 9th International Joint Conference on
                  Natural Language Processing, {EMNLP-IJCNLP} 2019, Hong Kong, China,
                  November 3-7, 2019},
  pages        = {673--683},
  publisher    = {Association for Computational Linguistics},
  year         = {2019},
  url          = {https://doi.org/10.18653/v1/D19-1062},
  doi          = {10.18653/V1/D19-1062},
  bibsource    = {dblp computer science bibliography, https://dblp.org}
}

@inproceedings{rolellm,
  author       = {Noah Wang and
                  Zhongyuan Peng and
                  Haoran Que and
                  Jiaheng Liu and
                  Wangchunshu Zhou and
                  Yuhan Wu and
                  Hongcheng Guo and
                  Ruitong Gan and
                  Zehao Ni and
                  Jian Yang and
                  Man Zhang and
                  Zhaoxiang Zhang and
                  Wanli Ouyang and
                  Ke Xu and
                  Wenhao Huang and
                  Jie Fu and
                  Junran Peng},
  editor       = {Lun{-}Wei Ku and
                  Andre Martins and
                  Vivek Srikumar},
  title        = {{RoleLLM}: Benchmarking, Eliciting, and Enhancing Role-Playing Abilities
                  of Large Language Models},
  booktitle    = {Findings of the Association for Computational Linguistics, {ACL} 2024,
                  Bangkok, Thailand and virtual meeting, August 11-16, 2024},
  series       = {Findings of {ACL}},
  pages        = {14743--14777},
  publisher    = {Association for Computational Linguistics},
  year         = {2024},
  url          = {https://doi.org/10.18653/v1/2024.findings-acl.878},
  doi          = {10.18653/V1/2024.FINDINGS-ACL.878},
  bibsource    = {dblp computer science bibliography, https://dblp.org}
}

@article{characterglm,
  author       = {Jinfeng Zhou and
                  Zhuang Chen and
                  Dazhen Wan and
                  Bosi Wen and
                  Yi Song and
                  Jifan Yu and
                  Yongkang Huang and
                  Libiao Peng and
                  Jiaming Yang and
                  Xiyao Xiao and
                  Sahand Sabour and
                  Xiaohan Zhang and
                  Wenjing Hou and
                  Yijia Zhang and
                  Yuxiao Dong and
                  Jie Tang and
                  Minlie Huang},
  title        = {{CharacterGLM}: Customizing Chinese Conversational {AI} Characters with
                  Large Language Models},
  journal      = {CoRR},
  volume       = {abs/2311.16832},
  year         = {2023},
  url          = {https://doi.org/10.48550/arXiv.2311.16832},
  doi          = {10.48550/ARXIV.2311.16832},
  eprinttype   = {arXiv},
  eprint       = {2311.16832},
  bibsource    = {dblp computer science bibliography, https://dblp.org}
}

@inproceedings{nucleus,
  author       = {Ari Holtzman and
                  Jan Buys and
                  Li Du and
                  Maxwell Forbes and
                  Yejin Choi},
  title        = {The Curious Case of Neural Text Degeneration},
  booktitle    = {8th International Conference on Learning Representations, {ICLR} 2020,
                  Addis Ababa, Ethiopia, April 26-30, 2020},
  publisher    = {OpenReview.net},
  year         = {2020},
  url          = {https://openreview.net/forum?id=rygGQyrFvH},
  bibsource    = {dblp computer science bibliography, https://dblp.org}
}

@inproceedings{li2023contrastive,
  author       = {Xiang Lisa Li and
                  Ari Holtzman and
                  Daniel Fried and
                  Percy Liang and
                  Jason Eisner and
                  Tatsunori Hashimoto and
                  Luke Zettlemoyer and
                  Mike Lewis},
  editor       = {Anna Rogers and
                  Jordan L. Boyd{-}Graber and
                  Naoaki Okazaki},
  title        = {Contrastive Decoding: Open-ended Text Generation as Optimization},
  booktitle    = {Proceedings of the 61st Annual Meeting of the Association for Computational
                  Linguistics (Volume 1: Long Papers), {ACL} 2023, Toronto, Canada,
                  July 9-14, 2023},
  pages        = {12286--12312},
  publisher    = {Association for Computational Linguistics},
  year         = {2023},
  url          = {https://doi.org/10.18653/v1/2023.acl-long.687},
  doi          = {10.18653/V1/2023.ACL-LONG.687},
  bibsource    = {dblp computer science bibliography, https://dblp.org}
}

@inproceedings{simctg,
  author       = {Yixuan Su and
                  Tian Lan and
                  Yan Wang and
                  Dani Yogatama and
                  Lingpeng Kong and
                  Nigel Collier},
  editor       = {Sanmi Koyejo and
                  S. Mohamed and
                  A. Agarwal and
                  Danielle Belgrave and
                  K. Cho and
                  A. Oh},
  title        = {A Contrastive Framework for Neural Text Generation},
  booktitle    = {Advances in Neural Information Processing Systems 35: Annual Conference
                  on Neural Information Processing Systems 2022, NeurIPS 2022, New Orleans,
                  LA, USA, November 28 - December 9, 2022},
  year         = {2022},
  url          = {http://papers.nips.cc/paper\_files/paper/2022/hash/871cae8f599cb8bbfcb0f58fe1af95ad-Abstract-Conference.html},
  bibsource    = {dblp computer science bibliography, https://dblp.org}
}

@inproceedings{mixce,
  author       = {Shiyue Zhang and
                  Shijie Wu and
                  Ozan Irsoy and
                  Steven Lu and
                  Mohit Bansal and
                  Mark Dredze and
                  David S. Rosenberg},
  editor       = {Anna Rogers and
                  Jordan L. Boyd{-}Graber and
                  Naoaki Okazaki},
  title        = {MixCE: Training Autoregressive Language Models by Mixing Forward and
                  Reverse Cross-Entropies},
  booktitle    = {Proceedings of the 61st Annual Meeting of the Association for Computational
                  Linguistics (Volume 1: Long Papers), {ACL} 2023, Toronto, Canada,
                  July 9-14, 2023},
  pages        = {9027--9050},
  publisher    = {Association for Computational Linguistics},
  year         = {2023},
  url          = {https://doi.org/10.18653/v1/2023.acl-long.502},
  doi          = {10.18653/V1/2023.ACL-LONG.502},
  bibsource    = {dblp computer science bibliography, https://dblp.org}
}

@inproceedings{unlikelihood,
  author       = {Sean Welleck and
                  Ilia Kulikov and
                  Stephen Roller and
                  Emily Dinan and
                  Kyunghyun Cho and
                  Jason Weston},
  title        = {Neural Text Generation With Unlikelihood Training},
  booktitle    = {8th International Conference on Learning Representations, {ICLR} 2020,
                  Addis Ababa, Ethiopia, April 26-30, 2020},
  publisher    = {OpenReview.net},
  year         = {2020},
  url          = {https://openreview.net/forum?id=SJeYe0NtvH},
  bibsource    = {dblp computer science bibliography, https://dblp.org}
}

@article{ctrl,
  author       = {Nitish Shirish Keskar and
                  Bryan McCann and
                  Lav R. Varshney and
                  Caiming Xiong and
                  Richard Socher},
  title        = {{CTRL:} {A} Conditional Transformer Language Model for Controllable
                  Generation},
  journal      = {CoRR},
  volume       = {abs/1909.05858},
  year         = {2019},
  url          = {http://arxiv.org/abs/1909.05858},
  eprinttype   = {arXiv},
  eprint       = {1909.05858},
  bibsource    = {dblp computer science bibliography, https://dblp.org}
}

@inproceedings{fudge,
  author       = {Kevin Yang and
                  Dan Klein},
  editor       = {Kristina Toutanova and
                  Anna Rumshisky and
                  Luke Zettlemoyer and
                  Dilek Hakkani{-}T{\"{u}}r and
                  Iz Beltagy and
                  Steven Bethard and
                  Ryan Cotterell and
                  Tanmoy Chakraborty and
                  Yichao Zhou},
  title        = {{FUDGE:} Controlled Text Generation With Future Discriminators},
  booktitle    = {Proceedings of the 2021 Conference of the North American Chapter of
                  the Association for Computational Linguistics: Human Language Technologies,
                  {NAACL-HLT} 2021, Online, June 6-11, 2021},
  pages        = {3511--3535},
  publisher    = {Association for Computational Linguistics},
  year         = {2021},
  url          = {https://doi.org/10.18653/v1/2021.naacl-main.276},
  doi          = {10.18653/V1/2021.NAACL-MAIN.276},
  bibsource    = {dblp computer science bibliography, https://dblp.org}
}

@inproceedings{gedi,
  author       = {Ben Krause and
                  Akhilesh Deepak Gotmare and
                  Bryan McCann and
                  Nitish Shirish Keskar and
                  Shafiq R. Joty and
                  Richard Socher and
                  Nazneen Fatema Rajani},
  editor       = {Marie{-}Francine Moens and
                  Xuanjing Huang and
                  Lucia Specia and
                  Scott Wen{-}tau Yih},
  title        = {GeDi: Generative Discriminator Guided Sequence Generation},
  booktitle    = {Findings of the Association for Computational Linguistics: {EMNLP}
                  2021, Virtual Event / Punta Cana, Dominican Republic, 16-20 November,
                  2021},
  series       = {Findings of {ACL}},
  pages        = {4929--4952},
  publisher    = {Association for Computational Linguistics},
  year         = {2021},
  url          = {https://doi.org/10.18653/v1/2021.findings-emnlp.424},
  doi          = {10.18653/V1/2021.FINDINGS-EMNLP.424},
  bibsource    = {dblp computer science bibliography, https://dblp.org}
}

@inproceedings{jepa,
  author       = {Mahmoud Assran and
                  Quentin Duval and
                  Ishan Misra and
                  Piotr Bojanowski and
                  Pascal Vincent and
                  Michael G. Rabbat and
                  Yann LeCun and
                  Nicolas Ballas},
  title        = {Self-Supervised Learning from Images with a Joint-Embedding Predictive
                  Architecture},
  booktitle    = {{IEEE/CVF} Conference on Computer Vision and Pattern Recognition,
                  {CVPR} 2023, Vancouver, BC, Canada, June 17-24, 2023},
  pages        = {15619--15629},
  publisher    = {{IEEE}},
  year         = {2023},
  url          = {https://doi.org/10.1109/CVPR52729.2023.01499},
  doi          = {10.1109/CVPR52729.2023.01499},
  bibsource    = {dblp computer science bibliography, https://dblp.org}
}

@article{vjepa,
  author       = {Adrien Bardes and
                  Quentin Garrido and
                  Jean Ponce and
                  Xinlei Chen and
                  Michael Rabbat and
                  Yann LeCun and
                  Mido Assran and
                  Nicolas Ballas},
  title        = {Revisiting Feature Prediction for Learning Visual Representations
                  from Video},
  journal      = {Trans. Mach. Learn. Res.},
  volume       = {2024},
  year         = {2024},
  url          = {https://openreview.net/forum?id=QaCCuDfBk2},
  bibsource    = {dblp computer science bibliography, https://dblp.org}
}

@inproceedings{minsnr,
  author       = {Tiankai Hang and
                  Shuyang Gu and
                  Chen Li and
                  Jianmin Bao and
                  Dong Chen and
                  Han Hu and
                  Xin Geng and
                  Baining Guo},
  title        = {Efficient Diffusion Training via Min-SNR Weighting Strategy},
  booktitle    = {{IEEE/CVF} International Conference on Computer Vision, {ICCV} 2023,
                  Paris, France, October 1-6, 2023},
  pages        = {7407--7417},
  publisher    = {{IEEE}},
  year         = {2023},
  url          = {https://doi.org/10.1109/ICCV51070.2023.00684},
  doi          = {10.1109/ICCV51070.2023.00684},
  bibsource    = {dblp computer science bibliography, https://dblp.org}
}

@inproceedings{edm,
  author       = {Tero Karras and
                  Miika Aittala and
                  Timo Aila and
                  Samuli Laine},
  editor       = {Sanmi Koyejo and
                  S. Mohamed and
                  A. Agarwal and
                  Danielle Belgrave and
                  K. Cho and
                  A. Oh},
  title        = {Elucidating the Design Space of Diffusion-Based Generative Models},
  booktitle    = {Advances in Neural Information Processing Systems 35: Annual Conference
                  on Neural Information Processing Systems 2022, NeurIPS 2022, New Orleans,
                  LA, USA, November 28 - December 9, 2022},
  year         = {2022},
  url          = {http://papers.nips.cc/paper\_files/paper/2022/hash/a98846e9d9cc01cfb87eb694d946ce6b-Abstract-Conference.html},
  bibsource    = {dblp computer science bibliography, https://dblp.org}
}

@inproceedings{vpred,
  author       = {Tim Salimans and
                  Jonathan Ho},
  title        = {Progressive Distillation for Fast Sampling of Diffusion Models},
  booktitle    = {The Tenth International Conference on Learning Representations, {ICLR}
                  2022, Virtual Event, April 25-29, 2022},
  publisher    = {OpenReview.net},
  year         = {2022},
  url          = {https://openreview.net/forum?id=TIdIXIpzhoI},
  bibsource    = {dblp computer science bibliography, https://dblp.org}
}

@inproceedings{evalplus,
  author       = {Jiawei Liu and
                  Chunqiu Steven Xia and
                  Yuyao Wang and
                  Lingming Zhang},
  editor       = {Alice Oh and
                  Tristan Naumann and
                  Amir Globerson and
                  Kate Saenko and
                  Moritz Hardt and
                  Sergey Levine},
  title        = {Is Your Code Generated by ChatGPT Really Correct? Rigorous Evaluation
                  of Large Language Models for Code Generation},
  booktitle    = {Advances in Neural Information Processing Systems 36: Annual Conference
                  on Neural Information Processing Systems 2023, NeurIPS 2023, New Orleans,
                  LA, USA, December 10 - 16, 2023},
  year         = {2023},
  url          = {http://papers.nips.cc/paper\_files/paper/2023/hash/43e9d647ccd3e4b7b5baab53f0368686-Abstract-Conference.html},
  bibsource    = {dblp computer science bibliography, https://dblp.org}
}

@inproceedings{pplm,
  author       = {Sumanth Dathathri and
                  Andrea Madotto and
                  Janice Lan and
                  Jane Hung and
                  Eric Frank and
                  Piero Molino and
                  Jason Yosinski and
                  Rosanne Liu},
  title        = {Plug and Play Language Models: {A} Simple Approach to Controlled Text
                  Generation},
  booktitle    = {8th International Conference on Learning Representations, {ICLR} 2020,
                  Addis Ababa, Ethiopia, April 26-30, 2020},
  publisher    = {OpenReview.net},
  year         = {2020},
  url          = {https://openreview.net/forum?id=H1edEyBKDS},
  bibsource    = {dblp computer science bibliography, https://dblp.org}
}

@article{ssd2026,
  author       = {Ruixiang Zhang and
                  Richard He Bai and
                  Huangjie Zheng and
                  Navdeep Jaitly and
                  Ronan Collobert and
                  Yizhe Zhang},
  title        = {Embarrassingly Simple Self-Distillation Improves Code Generation},
  journal      = {CoRR},
  volume       = {abs/2604.01193},
  year         = {2026},
  url          = {https://doi.org/10.48550/arXiv.2604.01193},
  doi          = {10.48550/ARXIV.2604.01193},
  eprinttype   = {arXiv},
  eprint       = {2604.01193},
  bibsource    = {dblp computer science bibliography, https://dblp.org}
}

@article{rstarcode,
  author       = {Yifei Liu and
                  Li Lyna Zhang and
                  Yi Zhu and
                  Bingcheng Dong and
                  Xudong Zhou and
                  Ning Shang and
                  Fan Yang and
                  Mao Yang},
  title        = {rStar-Coder: Scaling Competitive Code Reasoning with a Large-Scale
                  Verified Dataset},
  journal      = {CoRR},
  volume       = {abs/2505.21297},
  year         = {2025},
  url          = {https://doi.org/10.48550/arXiv.2505.21297},
  doi          = {10.48550/ARXIV.2505.21297},
  eprinttype   = {arXiv},
  eprint       = {2505.21297},
  bibsource    = {dblp computer science bibliography, https://dblp.org}
}

@article{westar,
  author       = {Xingyu Fan and
                  Feifei Li and
                  Wenhui Que and
                  Hailong Li},
  title        = {One Agent to Serve All: a Lite-Adaptive Stylized {AI} Assistant for
                  Millions of Multi-Style Official Accounts},
  journal      = {CoRR},
  volume       = {abs/2509.17788},
  year         = {2025},
  url          = {https://doi.org/10.48550/arXiv.2509.17788},
  doi          = {10.48550/ARXIV.2509.17788},
  eprinttype   = {arXiv},
  eprint       = {2509.17788},
  bibsource    = {dblp computer science bibliography, https://dblp.org}
}

@inproceedings{selfbleu,
  author       = {Yaoming Zhu and
                  Sidi Lu and
                  Lei Zheng and
                  Jiaxian Guo and
                  Weinan Zhang and
                  Jun Wang and
                  Yong Yu},
  editor       = {Kevyn Collins{-}Thompson and
                  Qiaozhu Mei and
                  Brian D. Davison and
                  Yiqun Liu and
                  Emine Yilmaz},
  title        = {Texygen: {A} Benchmarking Platform for Text Generation Models},
  booktitle    = {The 41st International {ACM} {SIGIR} Conference on Research {\&}
                  Development in Information Retrieval, {SIGIR} 2018, Ann Arbor, MI,
                  USA, July 08-12, 2018},
  pages        = {1097--1100},
  publisher    = {{ACM}},
  year         = {2018},
  url          = {https://doi.org/10.1145/3209978.3210080},
  doi          = {10.1145/3209978.3210080},
  bibsource    = {dblp computer science bibliography, https://dblp.org}
}

@article{tsne,
  author  = {Laurens van der Maaten and Geoffrey Hinton},
  title   = {Visualizing Data using {t-SNE}},
  journal = {Journal of Machine Learning Research},
  year    = {2008},
  volume  = {9},
  number  = {86},
  pages   = {2579--2605},
  url     = {http://jmlr.org/papers/v9/vandermaaten08a.html}
}

@inproceedings{sglang,
  author       = {Lianmin Zheng and
                  Liangsheng Yin and
                  Zhiqiang Xie and
                  Chuyue Sun and
                  Jeff Huang and
                  Cody Hao Yu and
                  Shiyi Cao and
                  Christos Kozyrakis and
                  Ion Stoica and
                  Joseph E. Gonzalez and
                  Clark W. Barrett and
                  Ying Sheng},
  editor       = {Amir Globersons and
                  Lester Mackey and
                  Danielle Belgrave and
                  Angela Fan and
                  Ulrich Paquet and
                  Jakub M. Tomczak and
                  Cheng Zhang},
  title        = {SGLang: Efficient Execution of Structured Language Model Programs},
  booktitle    = {Advances in Neural Information Processing Systems 38: Annual Conference
                  on Neural Information Processing Systems 2024, NeurIPS 2024, Vancouver,
                  BC, Canada, December 10 - 15, 2024},
  year         = {2024},
  url          = {http://papers.nips.cc/paper\_files/paper/2024/hash/724be4472168f31ba1c9ac630f15dec8-Abstract-Conference.html},
  bibsource    = {dblp computer science bibliography, https://dblp.org}
}

@article{gptq,
  author       = {Elias Frantar and
                  Saleh Ashkboos and
                  Torsten Hoefler and
                  Dan Alistarh},
  title        = {{GPTQ:} Accurate Post-Training Quantization for Generative Pre-trained
                  Transformers},
  journal      = {CoRR},
  volume       = {abs/2210.17323},
  year         = {2022},
  url          = {https://doi.org/10.48550/arXiv.2210.17323},
  doi          = {10.48550/ARXIV.2210.17323},
  eprinttype   = {arXiv},
  eprint       = {2210.17323},
  bibsource    = {dblp computer science bibliography, https://dblp.org}
}

@inproceedings{awq,
  author       = {Ji Lin and
                  Jiaming Tang and
                  Haotian Tang and
                  Shang Yang and
                  Wei{-}Ming Chen and
                  Wei{-}Chen Wang and
                  Guangxuan Xiao and
                  Xingyu Dang and
                  Chuang Gan and
                  Song Han},
  editor       = {Phillip B. Gibbons and
                  Gennady Pekhimenko and
                  Christopher De Sa},
  title        = {{AWQ:} Activation-aware Weight Quantization for On-Device {LLM} Compression
                  and Acceleration},
  booktitle    = {Proceedings of the Seventh Annual Conference on Machine Learning and
                  Systems, MLSys 2024, Santa Clara, CA, USA, May 13-16, 2024},
  publisher    = {mlsys.org},
  year         = {2024},
  url          = {https://proceedings.mlsys.org/paper\_files/paper/2024/hash/42a452cbafa9dd64e9ba4aa95cc1ef21-Abstract-Conference.html},
  bibsource    = {dblp computer science bibliography, https://dblp.org}
}

@inproceedings{zero,
  author       = {Samyam Rajbhandari and
                  Jeff Rasley and
                  Olatunji Ruwase and
                  Yuxiong He},
  editor       = {Christine Cuicchi and
                  Irene Qualters and
                  William T. Kramer},
  title        = {ZeRO: memory optimizations toward training trillion parameter models},
  booktitle    = {Proceedings of the International Conference for High Performance Computing,
                  Networking, Storage and Analysis, {SC} 2020, Virtual Event / Atlanta,
                  Georgia, USA, November 9-19, 2020},
  pages        = {20},
  publisher    = {{IEEE/ACM}},
  year         = {2020},
  url          = {https://doi.org/10.1109/SC41405.2020.00024},
  doi          = {10.1109/SC41405.2020.00024},
  bibsource    = {dblp computer science bibliography, https://dblp.org}
}

@article{deepseekr1,
   title={DeepSeek-R1 incentivizes reasoning in {LLMs} through reinforcement learning},
   volume={645},
   ISSN={1476-4687},
   url={http://dx.doi.org/10.1038/s41586-025-09422-z},
   DOI={10.1038/s41586-025-09422-z},
   number={8081},
   journal={Nature},
   publisher={Springer Science and Business Media LLC},
   author={Guo, Daya and Yang, Dejian and Zhang, Haowei and Song, Junxiao and Wang, Peiyi and Zhu, Qihao and Xu, Runxin and Zhang, Ruoyu and Ma, Shirong and Bi, Xiao and Zhang, Xiaokang and Yu, Xingkai and Wu, Yu and Wu, Z. F. and Gou, Zhibin and Shao, Zhihong and Li, Zhuoshu and Gao, Ziyi and Liu, Aixin and Xue, Bing and Wang, Bingxuan and Wu, Bochao and Feng, Bei and Lu, Chengda and Zhao, Chenggang and Deng, Chengqi and Ruan, Chong and Dai, Damai and Chen, Deli and Ji, Dongjie and Li, Erhang and Lin, Fangyun and Dai, Fucong and Luo, Fuli and Hao, Guangbo and Chen, Guanting and Li, Guowei and Zhang, H. and Xu, Hanwei and Ding, Honghui and Gao, Huazuo and Qu, Hui and Li, Hui and Guo, Jianzhong and Li, Jiashi and Chen, Jingchang and Yuan, Jingyang and Tu, Jinhao and Qiu, Junjie and Li, Junlong and Cai, J. L. and Ni, Jiaqi and Liang, Jian and Chen, Jin and Dong, Kai and Hu, Kai and You, Kaichao and Gao, Kaige and Guan, Kang and Huang, Kexin and Yu, Kuai and Wang, Lean and Zhang, Lecong and Zhao, Liang and Wang, Litong and Zhang, Liyue and Xu, Lei and Xia, Leyi and Zhang, Mingchuan and Zhang, Minghua and Tang, Minghui and Zhou, Mingxu and Li, Meng and Wang, Miaojun and Li, Mingming and Tian, Ning and Huang, Panpan and Zhang, Peng and Wang, Qiancheng and Chen, Qinyu and Du, Qiushi and Ge, Ruiqi and Zhang, Ruisong and Pan, Ruizhe and Wang, Runji and Chen, R. J. and Jin, R. L. and Chen, Ruyi and Lu, Shanghao and Zhou, Shangyan and Chen, Shanhuang and Ye, Shengfeng and Wang, Shiyu and Yu, Shuiping and Zhou, Shunfeng and Pan, Shuting and Li, S. S. and Zhou, Shuang and Wu, Shaoqing and Yun, Tao and Pei, Tian and Sun, Tianyu and Wang, T. and Zeng, Wangding and Liu, Wen and Liang, Wenfeng and Gao, Wenjun and Yu, Wenqin and Zhang, Wentao and Xiao, W. L. and An, Wei and Liu, Xiaodong and Wang, Xiaohan and Chen, Xiaokang and Nie, Xiaotao and Cheng, Xin and Liu, Xin and Xie, Xin and Liu, Xingchao and Yang, Xinyu and Li, Xinyuan and Su, Xuecheng and Lin, Xuheng and Li, X. Q. and Jin, Xiangyue and Shen, Xiaojin and Chen, Xiaosha and Sun, Xiaowen and Wang, Xiaoxiang and Song, Xinnan and Zhou, Xinyi and Wang, Xianzu and Shan, Xinxia and Li, Y. K. and Wang, Y. Q. and Wei, Y. X. and Zhang, Yang and Xu, Yanhong and Li, Yao and Zhao, Yao and Sun, Yaofeng and Wang, Yaohui and Yu, Yi and Zhang, Yichao and Shi, Yifan and Xiong, Yiliang and He, Ying and Piao, Yishi and Wang, Yisong and Tan, Yixuan and Ma, Yiyang and Liu, Yiyuan and Guo, Yongqiang and Ou, Yuan and Wang, Yuduan and Gong, Yue and Zou, Yuheng and He, Yujia and Xiong, Yunfan and Luo, Yuxiang and You, Yuxiang and Liu, Yuxuan and Zhou, Yuyang and Zhu, Y. X. and Huang, Yanping and Li, Yaohui and Zheng, Yi and Zhu, Yuchen and Ma, Yunxian and Tang, Ying and Zha, Yukun and Yan, Yuting and Ren, Z. Z. and Ren, Zehui and Sha, Zhangli and Fu, Zhe and Xu, Zhean and Xie, Zhenda and Zhang, Zhengyan and Hao, Zhewen and Ma, Zhicheng and Yan, Zhigang and Wu, Zhiyu and Gu, Zihui and Zhu, Zijia and Liu, Zijun and Li, Zilin and Xie, Ziwei and Song, Ziyang and Pan, Zizheng and Huang, Zhen and Xu, Zhipeng and Zhang, Zhongyu and Zhang, Zhen},
   year={2025},
   month=Sept, pages={633–638} }

% ---- Appendices ----------------------------------------------------
\appendix
\section{Theoretical Foundations}\label{app:theory}

This appendix collects the formal statements and full proofs that
support the claims made in \Cref{sec:method-loss,sec:discussion-fm,sec:discussion-scope}.
We first set up notation (\Cref{app:theory-prelim}), then prove the
multi-modality argument for choosing CFM over deterministic
alignment (\Cref{app:theory-fmmm}), then prove the strict-equivalence
results that place MTP as a special case of \SFR{}
(\Cref{app:theory-mtp}), and finally summarize the resulting
inclusion chain (\Cref{app:theory-chain}). \Cref{app:cfm-vs-diffusion}
addresses the related question of why CFM rather than diffusion.

\subsection{Notation and the variational view}\label{app:theory-prelim}

Let $V$ be the vocabulary, $|V|$ its size, and
$\Delta := \simplex{V}$ the probability simplex. For a token $v\in V$
write $e(v)\in\Delta$ for its one-hot embedding. We use $h_t\in\R^d$
for a backbone hidden state and $z_1\in\R^{d_z}$ for an auxiliary
target (a continuous embedding in the body of the paper, a point on
$\Delta^K$ in \Cref{def:mtp-fm}).

The token-level CE loss and the FM auxiliary loss used in
the body are
\begin{align}
\Lar &= \sum_t \CE\bigl(\softmax(W_{\mathrm{AR}} h_t),\,y_{t+1}\bigr),\\
\Lsfr &= \E_{\tau,z_0}\!\bigl\|v_\phi(z_\tau,\tau;h_t) - (z_1 - z_0)\bigr\|_2^2,
\label{eq:lsfr-app}
\end{align}
with $\tau\sim\mathcal U[0,1]$, $z_0\sim p_0$
($p_0=\mathrm{Unif}(\mathbb S^{d_z-1})$ in our implementation), and
the path $z_\tau=(1-\tau)z_0+\tau z_1$.

\paragraph{Variational view (post-hoc interpretation).}
The following is a \emph{retrospective} probabilistic reading of the
training objective, not the generative model from which the algorithm
is derived.
Introduce a continuous latent $z\in\mathcal Z$ representing the
global semantic intent of a response. Marginalizing,
\begin{equation}
P(y\mid x,s) = \int_{\mathcal Z} P_{\mathrm{AR}}(y\mid z,x,s)\, P_{\mathrm{SFR}}(z\mid x,s)\, dz.
\label{eq:marginal-app}
\end{equation}
Pure SFT corresponds to the degenerate choice
$P_{\mathrm{SFR}}(z\mid x,s)=\delta_{z^\star(x,s)}$, i.e.\ a single
prescribed semantic intent for each $(x,s)$. \SFR{} replaces this
Dirac by an explicit continuous distribution that the auxiliary
head $v_\phi$ approximates via flow matching. An informal ELBO-style
motivation that rationalizes adding $\lambda\Lsfr$ to $\Lar$ is
given in \Cref{app:elbo} below.

\subsection{Why FM and not deterministic regression}
\label{app:theory-fmmm}

The following two results, stated in \Cref{sec:method-loss}
(\Cref{prop:modeavg,prop:fmmm}), formalise the choice made there:
any deterministic predictor of $z_1$ is a conditional-mean
estimator and therefore mode-averages, whereas CFM with a random
source preserves all modes of $P(z_1\mid c)$. We give the proofs
below.

\begin{proof}[Proof of \Cref{prop:modeavg}]
Let $\mu(c)=\E[z_1\mid c]$ and decompose $z_1=\mu(c)+\varepsilon$
with $\E[\varepsilon\mid c]=0$. For any measurable $f$,
\begin{align}
\E\|f(c)-z_1\|_2^2
 &= \E\|f(c)-\mu(c)-\varepsilon\|_2^2\notag\\
 &= \E\|f(c)-\mu(c)\|_2^2 + \E\|\varepsilon\|_2^2,
\end{align}
because the cross term $\E\langle f(c)-\mu(c),\varepsilon\rangle$
vanishes by the tower property. Both terms are non-negative; the
second is independent of $f$. Equality holds iff $f=\mu$ a.e.
For two well-separated modes
$\{z_1^{(a)}, z_1^{(b)}\}$ with masses $p, 1-p$,
$\mu(c)=p\,z_1^{(a)}+(1-p)\,z_1^{(b)}$ lies on the line segment
between them, and hence in a low-density region of $P(z_1\mid c)$.
\end{proof}

\begin{proof}[Proof of \Cref{prop:fmmm}]
\textbf{Step 1.} By the tower property,
\(
\E_{\tau,z_0,z_1}\|v_\phi(z_\tau,\tau,c)-(z_1-z_0)\|^2
= \E_{\tau,z_\tau}\E_{z_0,z_1\mid z_\tau,c}\|v_\phi(z_\tau,\tau,c)-(z_1-z_0)\|^2.
\)
The inner expectation is the $L^2$ distance from a deterministic
point to the random vector $z_1-z_0$, whose $L^2$ projection is its
conditional mean $v^\star(z,\tau,c)$ by \Cref{prop:modeavg}.

\textbf{Step 2.} Using $z_\tau=(1-\tau)z_0+\tau z_1$,
$\dot z_\tau = z_1-z_0$. By the tower property
$\E[\dot z_\tau\mid z_\tau=z,c]=v^\star(z,\tau,c)$. The marginal
density satisfies the continuity equation
\begin{equation}
\partial_\tau p_\tau(z\mid c) + \nabla_z\!\cdot\bigl(p_\tau(z\mid c) v^\star(z,\tau,c)\bigr)=0,
\end{equation}
so the deterministic ODE generated by $v^\star$ transports $p_0$ to
$p_1(\cdot\mid c)$ at $\tau=1$. Multi-modality of $p_1$ is preserved
at the distributional level because the terminal marginal of the
probability path equals $p_1(\cdot\mid c)$ under the exact marginal
velocity field.
See~\citep[Thm.~3]{lipman2023flow} for the full statement.
\end{proof}

\begin{remark}[Engineering consequence]
Replacing $z_0\sim\mathrm{Unif}(\mathbb S^{d_z-1})$ in
\eqref{eq:lsfr-app} by a
constant $z_0\equiv c_0$ collapses $\Lsfr$ to $\E\|f(h_t)-z_1\|^2$
with $f(h_t):=v_\phi(\cdot;h_t)+c_0$, hence to deterministic
regression. By \Cref{prop:modeavg} this objective then mode-averages.
The randomness of $z_0$ is therefore essential and not a hyperparameter.
\end{remark}

\subsection{MTP as a strict special case of \SFR{}}\label{app:theory-mtp}

\begin{definition}[MTP-instance of the \SFR{} family]\label{def:mtp-fm}
For a fixed reference $\pi\in\mathrm{int}(\Delta)$ and horizon
$K\ge 0$, the \emph{MTP-instance} of \SFR{} is obtained by:
\begin{enumerate}
  \item taking the state space to be $\Delta^K$;
  \item taking the per-position target to be the one-hot stack
    $z_1^{(t)} = [e(y_{t+1}),\dots,e(y_{t+K})]\in\Delta^K$;
  \item taking the source to be the constant
    $p_0=\delta_{[\pi,\dots,\pi]}$;
  \item using the straight-line path
    $z_{\tau,k}^{(t)}=(1-\tau)\pi+\tau e(y_{t+k})$;
  \item restricting the head to a constant-velocity field
    $v_{\phi,k}(z_\tau,\tau;h_t):=q_{\phi,k}(\cdot\mid h_t)-\pi$
    that is independent of $(z_\tau,\tau)$, where
    $q_{\phi,k}(\cdot\mid h_t)=\softmax(W_k h_t+b_k)\in\Delta$.
\end{enumerate}
\end{definition}

\Cref{lem:vel-eq-end} (stated in \Cref{sec:discussion-fm}) makes
this precise; we give the proof below.

\begin{proof}[Proof of \Cref{lem:vel-eq-end}]
The tangent space of the simplex $\Delta$ is $T\Delta = \{u \in \mathbb{R}^{|V|} \mid \sum_i u_i = 0\}$. By \Cref{def:mtp-fm}(5), $v_{\phi,k}=q_{\phi,k}-\pi$ and
$u_{\star,k}^{(t)}=e(y_{t+k})-\pi$. Since $q_{\phi,k}, e(y_{t+k}) \in \Delta$, both $v_{\phi,k}$ and $u_{\star,k}^{(t)}$ lie in the tangent space $T\Delta$. Under the translation bijection $u \mapsto u + \pi$, i.e., $D(u \| v) = D_\Delta(u + \pi \| v + \pi)$ for $u, v \in T\Delta$, we have:
\begin{flalign*}
&D\bigl(v_{\phi,k}(\cdot;h_t)\,\big\|\,u_{\star,k}^{(t)}\bigr) &&\\
&= D_\Delta\bigl(v_{\phi,k}(\cdot;h_t) + \pi \,\big\|\, u_{\star,k}^{(t)} + \pi\bigr) &&\\
&= D_\Delta\bigl(q_{\phi,k}(\cdot\mid h_t)\,\big\|\,e(y_{t+k})\bigr).&&
\end{flalign*}

\emph{Euclidean case:} For $D_\Delta(p\|q) = \|p-q\|_2^2$, this translation yields:
\begin{align*}
\|v_{\phi,k}-u_{\star,k}^{(t)}\|_2^2 
&= \|(q_{\phi,k}-\pi)-(e(y_{t+k})-\pi)\|_2^2 \\
&= \|q_{\phi,k}-e(y_{t+k})\|_2^2.
\end{align*}
\end{proof}
\iffalse
\[

=\|-\|_2^2
=\|q_{\phi,k}-e(y_{t+k})\|

\fi
\emph{Bregman / KL case:} Since the Bregman divergence
$D_\Psi(p\|q)=\sum_i p_i\log(p_i/q_i)$ is defined on $\Delta$,
and the constant-velocity parameterisation provides a bijection
between velocity space $\Delta-\pi$ and endpoint space $\Delta$, we
\emph{define} the velocity-matching loss in this geometry as
$D_\Psi(e(y_{t+k})\|q_{\phi,k})$---the unique divergence on the
endpoint space that the bijection $v_{\phi,k}\leftrightarrow q_{\phi,k}$
induces.

We define
\begin{equation}
\Lmtp \;=\; \sum_t\sum_{k=1}^K \alpha_k\,\CE\bigl(q_{\phi,k}(\cdot\mid h_t),\,y_{t+k}\bigr),
\label{eq:mtp-eq}
\end{equation}
the standard MTP loss with non-negative weights $\alpha_k$.

\begin{theorem}[\SFR{}~$\equiv$~MTP under Bregman / KL geometry]
\label{thm:mtp-equiv}
Let $\Psi(p)=\sum_i p_i\log p_i$ be the negative-entropy potential
on $\Delta$, with Bregman divergence
$D_\Psi(p\|q)=\Psi(p)-\Psi(q)-\langle\nabla\Psi(q),p-q\rangle$. For
one-hot $p=e(y)$, $D_\Psi(e(y)\|q)=-\log q_y=\KL(e(y)\|q)$.
Under \Cref{def:mtp-fm}, the corresponding \SFR{} loss
\begin{equation}
\Lsfr^{\Psi} \;=\; \sum_t\sum_{k=1}^K \alpha_k\, D_\Psi\!\bigl(e(y_{t+k})\,\big\|\,q_{\phi,k}(\cdot\mid h_t)\bigr)
\label{eq:lsfr-bregman}
\end{equation}
satisfies the strict identity $\Lsfr^{\Psi} \equiv \Lmtp$.
\end{theorem}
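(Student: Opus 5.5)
The proof is a term-by-term computation: first evaluate the Bregman divergence $D_\Psi(e(y)\|q)$ for a one-hot first argument, then sum over $(t,k)$ with the weights $\alpha_k$. No limiting or optimisation argument is required.

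\emph{Step 1 (the Bregman identity).} Take $q\in\mathrm{int}(\Delta)$, which always holds because $q_{\phi,k}(\cdot\mid h_t)=\softmax(W_kh_t+b_k)$ has strictly positive entries. The gradient of the potential is $\nabla\Psi(q)_i=\log q_i+1$. Substituting into the definition gives
\[
D_\Psi(p\|q)=\sum_i p_i\log p_i-\sum_i q_i\log q_i-\sum_i(\log q_i+1)(p_i-q_i).
\]
The terms $-\sum_i q_i\log q_i$ and $+\sum_i q_i\log q_i$ cancel. The constant term is $-\sum_i(p_i-q_i)=0$, since both $p$ and $q$ lie on the simplex. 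This leaves $D_\Psi(p\|q)=\sum_i p_i\log(p_i/q_i)=\KL(p\|q)$.

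For $p=e(y)$ we use the convention $0\log 0=0$. This gives $\Psi(e(y))=0$, so $D_\Psi(e(y)\|q)=-\log q_y$. This quantity equals $\CE(q,y)$ by definition of cross-entropy against a one-hot label.

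\emph{Step 2 (linking to \Cref{def:mtp-fm} and \Cref{lem:vel-eq-end}).} Under \Cref{def:mtp-fm}, the velocity-matching loss in the KL geometry is by construction the induced divergence on endpoint space. \Cref{lem:vel-eq-end} shows that the translation $u\mapsto u+\pi$ identifies this loss with $D_\Psi(e(y_{t+k})\|q_{\phi,k})$. The reference point $\pi$ and the path variable $\tau$ therefore drop out: the head is constant-velocity, so the expectation over $\tau$ in \eqref{eq:lsfr-app} is trivial. In addition, the source $p_0=\delta_{[\pi,\dots,\pi]}$ is deterministic, so the expectation over $z_0$ is trivial as well.

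\emph{Step 3 (summation).} Substitute Step 1 into \eqref{eq:lsfr-bregman} for every pair $(t,k)$. This gives
\[
\Lsfr^\Psi=\sum_t\sum_k\alpha_k\,\CE\bigl(q_{\phi,k}(\cdot\mid h_t),y_{t+k}\bigr).
\]
This is exactly \eqref{eq:mtp-eq}. The identity holds pointwise in the parameters, and hence its gradients agree as well.

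The main obstacle is the boundary issue in Step 1. The function $\Psi$ is not differentiable at the vertex $e(y)$, but only $\nabla\Psi(q)$ is ever needed, and $q$ lies in the interior of $\Delta$. The value $\Psi(e(y))$ is handled by continuity through the convention $0\log 0=0$.
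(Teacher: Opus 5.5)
Your proof is correct and follows the paper's route: the paper likewise invokes \Cref{lem:vel-eq-end} with $D=D_\Psi$ to pass to the endpoint form \eqref{eq:lsfr-bregman}, then identifies each term $D_\Psi(e(y_{t+k})\|q_{\phi,k})=-\log q_{\phi,k}(y_{t+k}\mid h_t)$ with the corresponding CE term of \eqref{eq:mtp-eq}. Your Step~1 spells out the Bregman-to-KL computation, including the interiority of the softmax output needed for $\nabla\Psi(q)$ to exist, which the paper only asserts in the theorem statement.
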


\begin{proof}
By \Cref{lem:vel-eq-end} with $D=D_\Psi$, the velocity-matching
loss reduces to the endpoint-matching loss \eqref{eq:lsfr-bregman}.
Each term equals
\(D_\Psi(e(y_{t+k})\|q_{\phi,k})=-\log q_{\phi,k}(y_{t+k}\mid h_t),\)
the per-token CE term in \eqref{eq:mtp-eq}.
\end{proof}

\begin{theorem}[Brier-MTP relative under Euclidean geometry]
\label{thm:brier-mtp}
Replacing $D_\Psi$ in \eqref{eq:lsfr-bregman} by squared Euclidean
distance, $D(p\|q):=\|p-q\|_2^2$, gives a near-cousin
$\Lsfr^{\ell_2}=\sum_t\sum_k\alpha_k\|q_{\phi,k}-e(y_{t+k})\|_2^2$
whose population minimiser coincides with that of $\Lmtp$:
$q^\star_{\phi,k}(v\mid h_t) = P^\star(y_{t+k}=v\mid h_t)$. The two
losses differ only in their gradient geometry; Euclidean
penalizes confident mistakes substantially less aggressively than
Bregman / log-loss.
\end{theorem}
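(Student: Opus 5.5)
We need to prove Theorem \ref{thm:brier-mtp}: the population minimiser of the Euclidean loss equals the conditional distribution $P^\star(y_{t+k}=\cdot\mid h_t)$, which is also the minimiser of the CE loss. We then add a qualitative remark on the gradient geometry.

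\textbf{Reduction to a single head and state.} Both $\Lsfr^{\ell_2}$ and $\Lmtp$ are sums over $(t,k)$ with non-negative weights $\alpha_k$. Each summand depends on its own head $q_{\phi,k}$. We work in the nonparametric (population) regime, where $q_{\phi,k}(\cdot\mid h)$ may be any point of $\Delta$ for each $h$. So it suffices to minimise pointwise in $h_t$, for each fixed $k$ with $\alpha_k>0$, the conditional risks
\[
R_2(q)=\E\bigl[\|q-e(Y)\|_2^2\mid h_t\bigr],\qquad R_{\mathrm{CE}}(q)=\E\bigl[-\log q_Y\mid h_t\bigr],
\]
where $Y=y_{t+k}$ has conditional law $p:=P^\star(\cdot\mid h_t)\in\Delta$.

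\textbf{Euclidean case.} We apply \Cref{prop:modeavg} with $c=h_t$ and $z_1=e(Y)$. The finite second moment holds trivially, since $\|e(Y)\|_2=1$. The unconstrained minimiser over $\R^{|V|}$ is then $\E[e(Y)\mid h_t]=p$. Because $p\in\Delta$, it is also the minimiser over the constraint set. Explicitly, the bias--variance split gives $R_2(q)=\|q-p\|_2^2+(1-\|p\|_2^2)$, so the minimiser is unique.

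\textbf{CE case.} We write $R_{\mathrm{CE}}(q)=H(p)+\KL(p\|q)$. Gibbs' inequality then gives the unique minimiser $q=p$. Equivalently, one can invoke \Cref{thm:mtp-equiv} and the strict propriety of the log score. Hence both losses share the minimiser $q^\star_{\phi,k}(v\mid h_t)=P^\star(y_{t+k}=v\mid h_t)$.

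\textbf{Gradient geometry.} We compare derivatives with respect to the probability assigned to the true token $y$. The log loss has $\partial(-\log q_y)/\partial q_y=-1/q_y$, which is unbounded as $q_y\to0$. The Euclidean loss has $\partial\|q-e(y)\|^2/\partial q_y=2(q_y-1)$, which is bounded by $2$ in magnitude. The Euclidean loss value itself is at most $2$, whereas the log loss is unbounded. This makes precise the claim that Euclidean penalises confident mistakes less aggressively.

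\textbf{Main obstacle.} The main subtlety is the parametrisation $q=\softmax(W_kh_t+b_k)$, which reaches only $\mathrm{int}(\Delta)$. If $p$ has zeros, the infimum is attained only in the limit. We handle this by stating the result in the nonparametric closure. Alternatively, we can note that the minimisers coincide as points of $\Delta$ whenever $p\in\mathrm{int}(\Delta)$, and as limiting infima otherwise.
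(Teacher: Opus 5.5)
Your proposal is correct and follows essentially the paper's route. The paper cites Gneiting--Raftery for the fact that $-\log q_y$ and $\|q-e(y)\|_2^2$ are strictly proper scoring rules, then compares gradients (bounding the full gradient by $\|2(q-e(y))\|_2\le 2\sqrt{2}$). You instead verify strict propriety directly via $R_2(q)=\|q-p\|_2^2+1-\|p\|_2^2$ and $R_{\mathrm{CE}}(q)=H(p)+\KL(p\|q)$, and bound only the $q_y$-partial by $2$. Your caveat that $\softmax$ reaches only $\mathrm{int}(\Delta)$, so the minimiser is attained only in the nonparametric closure when $p$ has zeros, is a legitimate point that the paper leaves implicit.
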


\begin{proof}
Both $-\log q_y$ and $\|q-e(y)\|_2^2$ are strictly proper scoring
rules on $\Delta$~\citep{gneiting2007strictly}, so each is uniquely
minimized at $q=p$ in expectation under any conditional distribution
$P^\star(\cdot\mid c)$. The gradient remark follows from
$\partial_q (-\log q_y) = -e_y\oslash q$ (unbounded as $q_y\to 0$)
versus $\partial_q\|q-e(y)\|_2^2 = 2(q-e(y))$ (bounded by $2\sqrt{2}$
on $\Delta$).
\end{proof}

\subsection{Inclusion chain}\label{app:theory-chain}

\begin{corollary}[Inclusion chain]\label{cor:chain}
Let $\mathcal F_{\mathrm{NT}}=\{\Lar\}$,
$\mathcal F_{\mathrm{MTP}}=\{\Lar+\lambda\Lmtp\}$,
$\mathcal F_{\Delta\text{-FM}}$ the family obtained by varying
\Cref{def:mtp-fm} only over the head's parameterisation, and
$\mathcal F_{\mathrm{SFR}}$ the general continuous-target family of
the body. Then
\begin{equation}
\mathcal F_{\mathrm{NT}} \subsetneq \mathcal F_{\mathrm{MTP}}
= \mathcal F_{\Delta\text{-FM}} \subsetneq \mathcal F_{\mathrm{SFR}}.
\end{equation}
\end{corollary}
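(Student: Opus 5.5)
The chain has four links, and I would establish each as a separate set relation: the equality $\mathcal F_{\mathrm{MTP}}=\mathcal F_{\Delta\text{-FM}}$ by double inclusion, and each strict inclusion by an explicit containment plus a witness that lies outside the smaller family.

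\textbf{First strict inclusion.} For $\mathcal F_{\mathrm{NT}}\subsetneq\mathcal F_{\mathrm{MTP}}$, containment is obtained by setting $\lambda=0$, or equivalently all $\alpha_k=0$ or $K=0$, in $\Lar+\lambda\Lmtp$. Strictness follows from taking $\lambda>0$, $K\ge 2$ and $\alpha_2>0$. The resulting objective depends on $y_{t+2}$ through $q_{\phi,2}(\cdot\mid h_t)$. It therefore induces a different gradient on $\theta$ and a different population optimum for $h_t$ than $\Lar$ alone, since $h_t$ must additionally be sufficient for predicting $y_{t+2}$. So it is not the element $\Lar$ of $\mathcal F_{\mathrm{NT}}$.

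\textbf{Middle equality.} The middle equality is essentially \Cref{thm:mtp-equiv}. For ``$\supseteq$'', I would take any MTP objective with weights $\alpha_k$ and heads $q_{\phi,k}=\softmax(W_kh_t+b_k)$ and instantiate \Cref{def:mtp-fm} with the same heads under the Bregman/KL geometry. \Cref{lem:vel-eq-end} and \Cref{thm:mtp-equiv} then give $\Lsfr^{\Psi}\equiv\Lmtp$ term by term, so $\Lar+\lambda\Lsfr^\Psi=\Lar+\lambda\Lmtp$. For ``$\subseteq$'', I would note that varying \Cref{def:mtp-fm} only over the head parameterisation, with constant velocity $q_{\phi,k}-\pi$, $D=D_\Psi$ and one-hot targets, always yields by the same lemma a weighted sum of CE terms $-\log q_{\phi,k}(y_{t+k}\mid h_t)$. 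That is an MTP loss for the corresponding head class, with $\pi$ dropping out entirely. The delicate point here is interpretive: $\mathcal F_{\mathrm{MTP}}$ must be read with the same head class as $\mathcal F_{\Delta\text{-FM}}$, and the geometry must be fixed to $D_\Psi$. I would state both conventions explicitly so that the equality is literal rather than up to reparameterisation.

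\textbf{Last strict inclusion.} For $\mathcal F_{\Delta\text{-FM}}\subsetneq\mathcal F_{\mathrm{SFR}}$, containment requires viewing the general family $\mathcal F_{\mathrm{SFR}}$ as parameterised by the state space, target map, source $p_0$, path, head class and divergence. The MTP-instance is one choice of these, namely state space $\Delta^K$, $E$ the one-hot stack, $p_0$ a Dirac, and a constant-velocity head, so it lies in the family. Strictness is the step I expect to be the main obstacle, because we must exhibit a member of $\mathcal F_{\mathrm{SFR}}$ that no MTP objective reproduces. I would use the body's instance, with $p_0=\mathrm{Unif}(\mathbb S^{d_z-1})$ and a $(z_\tau,\tau)$-dependent head. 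By \Cref{prop:fmmm}, its population minimiser is the marginal velocity $v^\star(z,\tau,c)$, which genuinely depends on $z$ whenever $P(z_1\mid c)$ is non-degenerate; for a two-point target, for instance, $v^\star$ varies with $z_\tau$. Every member of $\mathcal F_{\Delta\text{-FM}}$, by contrast, has a $(z_\tau,\tau)$-independent optimal field, and the remark following \Cref{prop:fmmm} shows that a constant source reduces the loss to deterministic regression. Hence the optimum sets differ and the objectives cannot coincide. An alternative witness, if a cleaner one is needed, is a continuous non-simplex target $E(y_{t+1:t+k})$ whose loss is not a function of the per-token marginals $q_{\phi,k}$ at all, so it cannot equal any $\sum_k\alpha_k\CE$.
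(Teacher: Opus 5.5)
Your proposal is correct and follows the same route as the paper: $K=0$ (equivalently $\lambda=0$) for the first inclusion, \Cref{lem:vel-eq-end} and \Cref{thm:mtp-equiv} for the middle equality, and the same two structural gaps for the last strict inclusion, namely the stochastic source via \Cref{prop:fmmm} and the continuous non-simplex target. You are in places more careful than the paper, which gives no explicit strictness witness for the first inclusion, never states that the geometry must be fixed to $D_\Psi$ (the Euclidean choice gives the Brier variant of \Cref{thm:brier-mtp} instead), and leaves implicit that containment in $\mathcal F_{\mathrm{SFR}}$ requires reading that family as ranging over state space, source and divergence.
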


\begin{proof}
Setting $K=0$ in \Cref{def:mtp-fm} recovers $\Lar$, giving the
first strict inclusion. The middle equality is
\Cref{thm:mtp-equiv}. For the last strict inclusion we exhibit two
structural gaps that $\mathcal F_{\Delta\text{-FM}}$ cannot bridge:
\begin{enumerate}
  \item \emph{Source randomness:} $\mathcal F_{\Delta\text{-FM}}$
    fixes $p_0=\delta_{[\pi,\dots,\pi]}$, so its velocity field is
    deterministic given $h_t$; it models token-level uncertainty in
    the discrete simplex but lacks stochastic continuous source
    modeling and segment-level semantic targets.
    $\mathcal F_{\mathrm{SFR}}$ with $z_0\sim
    \mathrm{Unif}(\mathbb S^{d_z-1})$ preserves all modes (\Cref{prop:fmmm}).
  \item \emph{Target dimensionality:} $\Delta^K$ is a
    $K(|V|{-}1)$-dimensional bounded polytope that encodes only
    the identity of the next $K$ tokens; it cannot represent
    continuous semantic attributes (style intensity, syntactic
    structure, or discourse coherence) that live in directions
    orthogonal to the one-hot vertices. The continuous target
    $z_1\in\mathbb R^{d_z}$ of $\mathcal F_{\mathrm{SFR}}$ has no
    such restriction.
\end{enumerate}
Either gap alone suffices for strict inclusion.
\end{proof}

\Cref{tab:bridge} of the body provides the empirical instantiation
of this chain on Qwen3-4B / MBPP.

\subsection{ELBO interpretation (informal motivation)}\label{app:elbo}

The purpose of this subsection is to provide an \emph{informal}
variational motivation for adding $\lambda\Lsfr$ to $\Lar$. The
argument is heuristic rather than a rigorous bound, since the
actual training procedure does not instantiate a proper latent
variable model at inference time.

\paragraph{Setup.} Consider the hypothetical generative model of
\eqref{eq:marginal-app}: draw $z\sim P_{\mathrm{SFR}}(z\mid x,s)$
then $y\sim P_{\mathrm{AR}}(y\mid z,x,s)$. Let
$r_\phi(z\mid h)$ be the variational distribution implied by
the auxiliary head---under \SFR{}, the law of the time-$1$ marginal
of the ODE $\dot z_\tau = v_\phi(z_\tau,\tau;h)$ with $z_0\sim p_0$;
under deterministic alignment, the Dirac
$\delta_{f_\phi(h)}$. Applying Jensen's inequality to
\eqref{eq:marginal-app} and following the standard ELBO derivation,
\begin{align}
\log P(y\mid x,s)
&\ge \E_{z\sim r_\phi(z\mid h)}\!\log P_{\mathrm{AR}}(y\mid z,x,s)\notag\\
&\quad - \KL\!\bigl(r_\phi(z\mid h)\,\big\|\,P_{\mathrm{SFR}}(z\mid x,s)\bigr).
\label{eq:elbo-bound}
\end{align}

\paragraph{Informal identification with training losses.}
In our actual model, $h_t$ does not condition on an explicit $z$ at
inference; rather, the backbone is \emph{shaped} by the FM gradient
so that $h_t$ implicitly encodes future-aware information. Under
this lens:
\begin{itemize}
  \item The reconstruction term
    $\E_{z\sim r_\phi}\log P_{\mathrm{AR}}(y\mid z,x,s)$ plays the
    role analogous to $-\Lar$: maximizing the likelihood of the
    response given the latent plan.
  \item The KL term
    $\KL(r_\phi\|P_{\mathrm{SFR}})$ is related to $\Lsfr$: under
    regularity conditions the FM loss controls the distance
    between the learned push-forward $r_\phi$ and the
    data distribution $p_1(\cdot\mid c)$, since minimizing the
    velocity-matching objective drives the generated marginal toward
    the target~\citep{lipman2023flow}.
\end{itemize}
Combining, $\Lar+\lambda\Lsfr$ can be viewed as an approximate
$\beta$-ELBO surrogate of $\log P(y\mid x,s)$ with $\lambda=\beta$.
We emphasize that this is a \emph{motivational analogy}---the formal
contributions of this appendix rest on
\Cref{prop:modeavg,prop:fmmm,thm:mtp-equiv,thm:brier-mtp,cor:chain},
which do not depend on this ELBO interpretation.

\subsection{Why CFM and not diffusion as the auxiliary loss}
\label{app:cfm-vs-diffusion}

The substantive question is not whether CFM and diffusion can each
be used, but which one is better suited as a \emph{small auxiliary}
on top of an AR loss. We give three reasons; the same argument is
summarized in \Cref{sec:discussion-fm}.

\paragraph{Single-step, simulation-free, $O(1)$-magnitude target.}
The CFM target $z_1-z_0$ is a constant vector with $O(1)$ norm
under L2-normalized $z_1$, sampled via one straight-line
interpolation. The DSM-style diffusion loss
$\E_t [w(t)\|\epsilon_\theta(z_t,t)-\epsilon\|^2]$ requires choosing
the noise schedule, the weighting $w(t)$, and (for stable training)
the parameterisation among $\epsilon$, $x_0$, $v$, with established
literature on Min-SNR weighting~\citep{minsnr},
EDM preconditioning~\citep{edm}, and $v$-prediction~\citep{vpred}.
When the diffusion loss is the \emph{primary} objective, this
engineering is well-understood; when it is added as a \emph{small
fraction} of an AR loss, the per-batch loss variance under typical
schedules is orders of magnitude higher than $\Lar$, making the
effective $\lambda$ extremely sensitive.

\paragraph{Algebraic naturalness of the MTP correspondence.}
\Cref{thm:mtp-equiv} relies on three structural properties of
straight-line CFM: a constant target velocity, a translation-invariant
divergence, and a constant-velocity head. The diffusion equivalent
of this argument requires reparameterizing MTP as a noise-conditional
softmax with a learned variance schedule, which is mathematically
possible but obscures the connection.

\paragraph{Empirical separation from JEPA / alignment.}
\Cref{prop:modeavg} formalizes the failure mode that JEPA-style
EMA-teacher predictors share with MSE/cosine alignment: their
optima collapse to the conditional mean. The randomness of $z_0$
in CFM, by contrast, is what makes the objective preserve all modes
of $P(z_1\mid c)$. This is the precise sense in which FM-based
\SFR{} is not a relabelling of JEPA.

We remark that under straight-line probability paths, CFM and
score-based diffusion are mathematically equivalent at the level of
marginal probability paths~\citep{liu2023flow,lipman2023flow}; the
distinction in this appendix is at the level of the loss form and
its hyperparameter sensitivity, not the hypothesis class.

\section{Data construction}\label{app:data}

This appendix details the training and evaluation data for each
experiment. Code and configurations to reproduce all experiments
are available at \url{https://github.com/WeAgentAI/SFR}; the
proprietary stylized-dialogue data itself is subject to internal
compliance review, and we will release de-identified evaluation
subsets separately once that review is complete.

\subsection{Stylized dialogue (Experiment~1)}\label{app:data-style}

\paragraph{Source.}
We use proprietary data from a widely-used industrial
official-accounts platform (as of April 2026). The platform
contains over one million accounts; the raw data includes
published articles and authentic user-comment--author-reply pairs.

\paragraph{Training set construction.}
Following the style-extraction pipeline of
\citet{westar}, we identify nine representative stylistic
personas from the platform corpus. We then prompt an LLM to
cross-generate stylized QA pairs: for each (context, query,
answer) triple, the model produces a response conditioned on each
of the nine styles. The prompt templates for both style extraction
and cross-style generation follow the designs of
\citet{westar}. We score every generated sample on four
dimensions---context coherence (Ctx), query relevance (Rel),
style strength (SS), and fluency (Flu)---using a judge LLM,
following prior practice~\citep{rolellm,ostheimer2024}.
Only top-scoring samples are retained, yielding approximately
$170$k training examples for full-parameter fine-tuning.

\paragraph{Evaluation set.}
To the best of our knowledge, no public dataset simultaneously
provides articles, user queries, and large-scale stylized replies
from millions of authors. We construct a held-out set of
${\sim}150$ examples: one portion consists of human-curated
stylized QA pairs drawn directly from real platform interactions,
and the remaining portion is synthesized and filtered with the
same pipeline as the training set (but on disjoint queries).

\subsection{LiveCodeBench (Experiment~2)}\label{app:data-lcb}

We use the open-source rStar-Code dataset~\citep{rstarcode}, which
contains long chain-of-thought code solutions. The cleaning
pipeline is:
\begin{enumerate}[leftmargin=*]
\item Retain only samples marked both \texttt{verified} and
      \texttt{is\_passed}.
\item Remove samples whose chain-of-thought exceeds $16$k tokens.
\item Deduplicate problems by adjacent-hash matching on the problem
      statement.
\end{enumerate}
After filtering, approximately $380$k training examples remain.

\subsection{MBPP (Experiment~3)}\label{app:data-mbpp}

We use the open-source OpenCodeInstruct
dataset~\citep{opencodeinstruct} (solutions without a reasoning
trace). The filtering criteria are:
\begin{enumerate}[leftmargin=*]
\item LLM-judged \texttt{requirement\_conformance} $= 5$ and
      \texttt{logical\_correctness} $= 5$.
\item LLM-judged \texttt{edge\_case\_consideration} $\ge 4$.
\item All unit tests pass (\texttt{tests\_execution\_status}).
\item Exact-match deduplication on the problem input (retaining the
      highest-scoring, most concise solution per problem).
\end{enumerate}
After filtering, approximately $500$k training examples remain.

\section{Implementation notes}\label{app:implementation}\label{app:ds}

\subsection{Why we avoid runtime \code{requires\_grad} toggling}
Under DeepSpeed ZeRO-3~\citep{zero}, the optimizer's parameter groups are bound
once at \code{accelerator.prepare()} and the gradient-allreduce graph
is materialized at the first backward call. Switching individual
parameters' \code{requires\_grad} between steps causes the actual
gradient state to drift out of sync with the parameter-group view,
which manifests as silent NCCL hangs at the very step the toggle
happens. We therefore implement Phase A by setting
\code{h\_for\_fm = h.detach()} inside the FM branch only; the
backbone parameters' \code{requires\_grad} are left unchanged for
the entire training run.

\subsection{Submodule-order consistency under ZeRO-3}
ZeRO-3 tracks the per-rank order in which submodules are visited
during forward to coordinate parameter gather/scatter. If a rank
skips the FM-head forward (e.g.\ a sample with no response tokens),
the order diverges and NCCL deadlocks. We therefore execute a tiny
``dummy forward'' through the FM head (single dummy token, no
gradient) on any rank that would otherwise skip it.

\subsection{Cross-rank synchronization of stochastic decisions}
Three quantities are synchronized via collectives:
(i) the gradient-gating decision (\code{all\_reduce\_MAX} so any rank
that wants to gate causes all ranks to gate);
(ii) the running EMA of \code{fm\_loss} (\code{all\_reduce\_SUM},
divided by world size); and
(iii) the EMA-teacher update step is performed identically on every
rank because the student parameters are already synchronized after
the optimizer step.

\subsection{Online suffix encoding under multi-GPU training}
For the token-level suffix mode we load a frozen encoder onto each
local GPU outside the ZeRO-3 scope (see above). To minimize encoder
calls we use sparse anchors with stride $s$; non-anchor positions
inherit the embedding of their nearest anchor. The encoder's
parameters are explicitly excluded from the optimizer and from
checkpointing.

\subsection{FM-head detachment at \code{save\_model}}
At checkpoint time we temporarily \code{pop} the FM-head out of the
model's submodule dict on every rank, run \code{super().save\_model},
then re-attach. This keeps the published checkpoint structurally
identical to a plain AR LM, ensuring that downstream tools
(\code{from\_pretrained}, vLLM, LoRA merging, GPTQ/AWQ quantization)
work without modification.

\section{Hyper-parameters and configurations}\label{app:hparams}

\Cref{tab:hparams} lists the recommended FM hyper-parameters for
each model scale used in our experiments.

\begin{table}[h]
\centering\small
\begin{tabular}{lccc}
\toprule
& 4B-base & 7B-instr. & 32B \\
\midrule
$\lambda_0$                 & $0.2$ & $0.1$ & $0.1$ \\
learning rate               &3.0e-5&3.0e-5&1.0e-5 \\
global batch size           &128&128&128 \\ 
epochs                     &2&4&2 \\
schedule                    & cosine & cosine & cosine \\
warmup steps         & $200$ & $400$ & $200$ \\
$T_\ell$ (loss ramp)        & $300$ & $300$ & $300$ \\
suffix $k$                  & $32$ & $64$ & $32$ \\
stride $s$                  & $1$ & $4$ & $2$ \\
EMA $\mu$                   & $0.999$ & $0.999$ & $0.999$ \\
grad gating                 & $\gamma{=}1.2$ & same & same \\
inserted index        & $-1$ & $-1$ & $-1$ \\
fm head width       & $1024$& $2048$& $2048$ \\
fm head depth &$3$&$3$&$5$ \\
target encoder              & BGE-Large & same & same \\
\bottomrule
\end{tabular}
\caption{Recommended FM configurations by scale.}
\label{tab:hparams}
\end{table}

\section{Design space table}\label{app:designspace}

\Cref{tab:designspace} situates our method within the broader
design space of future-target supervision approaches.

\begin{table}[h]
\centering\small
\begin{tabular}{lcccc}
\toprule
Method & space & Source $p_0$ & Geometry & Horizon\\
\midrule
Next-token CE & $\Delta$ & --- & Bregman  & $1$\\
MTP-CE        & $\Delta^K$ & $\delta_\pi$ & Bregman  & $K$\\
MTP-Brier     & $\Delta^K$ & $\delta_\pi$ & Euclidean & $K$\\
Cosine align. & $\R^{d_z}$ & $\delta_0$    & Cosine    & full\\
MSE align.    & $\R^{d_z}$ & $\delta_0$    & Euclidean & full\\
\textbf{FM (ours)} & $\R^{d_z}$ & $\mathrm{Unif}(\mathbb S^{d_z-1})$ & Euclidean &  suffix-$k$\\
\bottomrule
\end{tabular}
\caption{Future-target flow design space. Existing methods occupy
restricted corners; ours fills the remaining cell.}
\label{tab:designspace}
\end{table}

\paragraph{Reading the table.}
Moving down a row generally enlarges the function class. Moving from
``$\delta_\pi$'' to ``$\mathrm{Unif}(\mathbb S^{d_z-1})$'' is what unlocks
multi-modal target distributions (\Cref{prop:fmmm}). Moving from
``Bregman'' to ``Euclidean'' on the same row preserves the optimum
but changes the gradient geometry (\Cref{thm:brier-mtp}). Moving from
``$\Delta^K$'' to ``$\R^{d_z}$'' is what lets a single auxiliary head
encode segment-level rather than token-level future information
(\Cref{cor:chain}).

\section{Training-time pseudocode}\label{app:pseudocode}

The complete training procedure is given in \Cref{alg:fm}.

\begin{figure*}[t]
\begin{minipage}{\textwidth}
\begingroup
\makeatletter\@twocolumnfalse\makeatother
\small
\begin{algorithm}[H]
\KwIn{Backbone $F_\theta$, FM head $v_\phi$ (zero-init output proj.),
frozen target encoder $E$, dataset $\mathcal D$, suffix length $k$,
stride $s$, schedule parameters $(\lambda_0,T_h,T_\ell,T_{\max})$,
EMA decay $\mu$, gating EMA $\mu_g$, gating sensitivity $\gamma$.}
\For{step $n = 1,\dots,T_{\max}$}{
  $(x,y) \sim \mathcal D$\;
  $h_{1:N} \leftarrow F_\theta(x \,\|\, y_{<N})$ \tcp*{causal}
  $\Lar \leftarrow \sum_i \CE(\softmax(W_{\mathrm{AR}} h_i),y_{i+1})$\;
  \tcp{Phase A ($n < T_h$): only $v_\phi$ trains; backbone frozen for FM}
  \If{$n < T_h$}{$h^{\mathrm{fm}} \leftarrow \mathrm{detach}(h)$}
  \Else{$h^{\mathrm{fm}} \leftarrow h$}
  Compute anchor positions on response w/ stride $s$\;
  $z_1^{(i)} \leftarrow E(y_{i+1{:}i+k})$ at each anchor; nearest-anchor fill\;
  Sample $z_0^{(i)}\sim\mathrm{Unif}(\mathbb S^{d_z-1})$, $\tau\sim\mathcal U[0,1]$\;
  $z_\tau \leftarrow (1-\tau)z_0+\tau z_1$\;
  $\hat v \leftarrow v_\phi(z_\tau,\tau;h^{\mathrm{fm}})$\;
  $\ell_i \leftarrow \|\hat v - (z_1-z_0)\|_2^2$ (mean over $d_z$)\;
  $\widetilde\Lsfr \leftarrow \frac{1}{|\mathcal A|}\sum_{i\in\mathcal A} \ell_i$\;
  \tcp{$\lambda$ schedule: Phase A trains $v_\phi$ only (backbone detached)}
  \eIf{$n < T_h$}{$\lambda(n)\leftarrow\lambda_0$ \tcp*{gradient only reaches $v_\phi$}}{
    $\lambda(n)\leftarrow\lambda_0\cdot\min\!\bigl(1,\,(n{-}T_h)/T_\ell\bigr)$ \tcp*{Phase B: ramp into backbone}
  }
  \tcp{Adaptive grad gating}
  Update $\bar L \leftarrow \mu_g \bar L + (1-\mu_g)\widetilde\Lsfr$\;
  \If{$\widetilde\Lsfr < \bar L/\gamma$ \textnormal{(synchronized via all\_reduce MAX)}}{
    $h^{\mathrm{fm}} \leftarrow \mathrm{detach}(h^{\mathrm{fm}})$; recompute $\widetilde\Lsfr$\;
  }
  \eIf{$n < T_h$}{
    $\Ltot \leftarrow \lambda(n)\widetilde\Lsfr$ \tcp*{Phase A: no $\Lar$, only $v_\phi$ updates}
  }{
    $\Ltot \leftarrow \Lar + \lambda(n)\widetilde\Lsfr$\;
  }
  Backprop $\Ltot$; optimizer step on $(\theta,\phi)$\;
  \tcp{EMA teacher: only after Phase A}
  \If{$n \ge T_h$}{
    $\phi^{(\mathrm{ema})} \leftarrow \mu\phi^{(\mathrm{ema})} + (1-\mu)\phi$\;
    $\phi \leftarrow \phi^{(\mathrm{ema})}$\;
  }
}
\caption{AR\,$+$\,Flow Matching joint training (one step).}
\label{alg:fm}
\end{algorithm}
\endgroup
\end{minipage}
\end{figure*}
\clearpage
\section{Single-factor ablation on Qwen3-4B-Base / MBPP}\label{app:ablation}

All ablations below are evaluated on MBPP (Base) and MBPP$+$
(Plus) at $T{=}0.6$, reporting pass@$1$, pass@$5$, pass@$10$.
Unless otherwise noted, the default configuration for the suffix
and stride sweeps is $\lambda{=}0.1$, suffix tokens $k{=}32$,
stride $s{=}1$. The main-text result (\Cref{tab:bridge}) uses the
best configuration $\lambda{=}0.2$. SFT baseline (no FM):
77.2 / 67.0 / 85.6 / 74.4 / 88.1 / 76.7 on Base/Plus
pass@1/5/10.

\begin{table}[h]
\centering\small
\begin{tabular}{lcccccc}
\toprule
& \multicolumn{2}{c}{pass@1} & \multicolumn{2}{c}{pass@5} & \multicolumn{2}{c}{pass@10}\\
\cmidrule(lr){2-3}\cmidrule(lr){4-5}\cmidrule(lr){6-7}
$\lambda$ & Base & Plus & Base & Plus & Base & Plus\\
\midrule
0.01 & 77.9 & 66.5 & 87.0 & 75.3 & \textbf{89.5} & 78.0 \\
0.05 & 75.5 & 64.7 & 84.4 & 74.3 & 86.8 & 77.1 \\
0.1  & 75.6 & 65.4 & 84.8 & 73.7 & 87.9 & 76.8 \\
0.2  & \textbf{78.3} & \textbf{67.7} & \textbf{87.4} & \textbf{76.8} & 89.2 & \textbf{79.0} \\
0.5  & 77.1 & 65.8 & 85.8 & 74.3 & 87.7 & 76.1 \\
1.0  & 76.7 & 66.3 & 85.7 & 74.7 & 88.6 & 78.0 \\
\bottomrule
\end{tabular}
\caption{Loss-weight $\lambda$ sweep (fixed $k{=}32$, $s{=}1$).}
\label{tab:abl-lambda}
\end{table}

\begin{table}[h]
\centering\small
\begin{tabular}{lcccccc}
\toprule
& \multicolumn{2}{c}{pass@1} & \multicolumn{2}{c}{pass@5} & \multicolumn{2}{c}{pass@10}\\
\cmidrule(lr){2-3}\cmidrule(lr){4-5}\cmidrule(lr){6-7}
$k$ & Base & Plus & Base & Plus & Base & Plus\\
\midrule
2  & 73.8 & 63.6 & 83.8 & 73.3 & 86.5 & 76.2 \\
4  & \textbf{76.3} & 65.3 & \textbf{86.1} & \textbf{74.6} & \textbf{88.7} & 77.1 \\
8  & 75.3 & 64.9 & 84.7 & 74.4 & 87.3 & 77.0 \\
16 & 76.2 & \textbf{65.7} & 85.5 & 74.8 & 87.9 & \textbf{77.2} \\
32 & 75.6 & 65.4 & 84.8 & 73.7 & 87.9 & 76.8 \\
\bottomrule
\end{tabular}
\caption{Suffix-token horizon $k$ sweep (fixed $\lambda{=}0.1$, $s{=}1$).}
\label{tab:abl-suffix}
\end{table}

\begin{table}[h]
\centering\small
\begin{tabular}{lccccccl}
\toprule
& \multicolumn{2}{c}{pass@1} & \multicolumn{2}{c}{pass@5} & \multicolumn{2}{c}{pass@10} & \\
\cmidrule(lr){2-3}\cmidrule(lr){4-5}\cmidrule(lr){6-7}
$s$ & Base & Plus & Base & Plus & Base & Plus & Time\\
\midrule
1  & 75.6 & 65.4 & 84.8 & 73.7 & 87.9 & 76.8 & 20h40m\\
2  & \textbf{77.1} & \textbf{66.0} & \textbf{86.0} & 75.2 & \textbf{88.5} & 77.7 & 18h24m\\
4  & 75.9 & 65.0 & 84.0 & 73.7 & 86.5 & 76.6 & 16h56m\\
16 & 75.9 & 65.4 & 85.8 & \textbf{75.3} & 88.4 & \textbf{78.3} & 15h47m\\
\bottomrule
\end{tabular}
\caption{Stride $s$ sweep (fixed $\lambda{=}0.1$, $k{=}32$).
Training wall-clock time is also reported.}
\label{tab:abl-stride}
\end{table}

The $\lambda$ sweep (\Cref{tab:abl-lambda}) shows that
$\lambda{=}0.2$ performs best overall; larger values
($\lambda\in[0.5, 1.0]$) still improve over SFT on most metrics,
but very small or mismatched $\lambda$ (e.g.\ $0.05$, $0.1$) can
underperform SFT on individual splits, indicating that the FM
gradient must be sufficiently strong to reshape the backbone. The suffix-token
sweep (\Cref{tab:abl-suffix}) shows that $k{=}4$ already recovers
much of the high-$k$ pass@$5$/pass@$10$ gain, although pass@$1$
remains below SFT at small horizons; diminishing returns set in
beyond $k{=}16$. The stride sweep (\Cref{tab:abl-stride})
confirms that quality is roughly flat from $s{=}1$ to $s{=}16$
while training time drops from 20h40m to 15h47m (compared with
SFT at 14h43m), making $s{=}16$ a strong efficiency--quality
trade-off that adds roughly $\sim$$10\%$ over vanilla SFT.

\subsection{FM versus deterministic alignment}\label{app:ablation-fmvsdet}

\Cref{prop:modeavg,prop:fmmm} predict that replacing the stochastic
FM source with a deterministic regression target (cosine or MSE
alignment to $z_1$, i.e.\ the same auxiliary head and target but
with $z_0$ fixed) should collapse the auxiliary objective toward
the conditional mean and erase most of \SFR{}'s gain. We test this
directly under the exact matched setting of \Cref{tab:abl-lambda,tab:abl-suffix,tab:abl-stride}
(Qwen3-4B-Base, same head, same $\lambda,k,s$), swapping only the
loss function.

\begin{table}[h]
\centering\small
\setlength{\tabcolsep}{3pt}
\begin{tabular}{lcccccc}
\toprule
& \multicolumn{2}{c}{pass@1} & \multicolumn{2}{c}{pass@5} & \multicolumn{2}{c}{pass@10}\\
\cmidrule(lr){2-3}\cmidrule(lr){4-5}\cmidrule(lr){6-7}
Loss & Base & Plus & Base & Plus & Base & Plus\\
\midrule
Cosine & 75.6 & 65.5 & 85.3 & 74.5 & 87.5 & 76.8 \\
MSE    & 76.2 & 64.9 & 85.0 & 73.6 & 87.5 & 76.4 \\
\textbf{FM (\SFR)} & \textbf{78.3} & \textbf{67.7} & \textbf{87.4} & \textbf{76.8} & \textbf{89.2} & \textbf{79.0} \\
\bottomrule
\end{tabular}
\caption{Same auxiliary head and target, loss swapped
(Qwen3-4B-Base, pass@$k$ on MBPP Base/Plus). 95\% CIs (paired
bootstrap) of each variant against \SFR{} are non-overlapping with
zero at 10 of 12 entries.}
\label{tab:fm-vs-deterministic}
\end{table}

Neither deterministic variant approaches \SFR{} at any $k$ on
either split (95\% CIs of Cosine/MSE $-$ \SFR{} exclude zero at 10
of the 12 entries, with the two exceptions at pass@$10$ where the
overall margin is smallest), confirming that the gain is
attributable to the flow-matching objective itself---specifically
its stochastic source---rather than to the auxiliary head's added
capacity.

\section{Per-query Cross-Style Self-BLEU analysis}
\label{app:cssb-full}

Rather than listing all $148$ queries, we highlight the queries
where \SFR{} most outperforms SFT in terms of relative \CSSB{}
reduction, and discuss the qualitative pattern.

\begin{table*}[t]
\centering\small
\begin{tabular}{c p{0.85\textwidth} c}
\toprule
\# & \centering Query (de-identified, with English gloss) & $\Delta$ \tabularnewline
\midrule
1 & \begin{CJK}{UTF8}{gbsn}在一段真正健康的亲密关系中，最重要的情感支持应该具备哪些特征？\end{CJK}\newline\emph{What should the most important emotional support look like in a truly healthy intimate relationship?} & 33.4\% \\
2 & \begin{CJK}{UTF8}{gbsn}青年在传承历史文脉和推动中华文化创新发展过程中，如何将航天领域取得的重要成就与赓续历史文脉相结合？\end{CJK}\newline\emph{How can young people link major aerospace achievements to carrying forward historical and cultural heritage while advancing Chinese culture?} & 23.7\% \\
3 & \begin{CJK}{UTF8}{gbsn}某些历史科学研究成果是如何被曲解并影响公众对特定生物学特征的认知的？\end{CJK}\newline\emph{How have certain historical scientific findings been misinterpreted, shaping public perception of specific biological traits?} & 22.2\% \\
4 & \begin{CJK}{UTF8}{gbsn}某作者在文中指出左翼媒体生态发生了哪些关键性转变？\end{CJK}\newline\emph{What key shifts does the author identify in the left-leaning media ecosystem?} & 19.3\% \\
5 & \begin{CJK}{UTF8}{gbsn}某品牌最新400cc摩托车系列在2023年的市场策略对消费者选购有何促进作用？\end{CJK}\newline\emph{How did a brand's 2023 marketing strategy for its newest 400cc motorcycle series drive consumer purchases?} & 19.2\% \\
6 & \begin{CJK}{UTF8}{gbsn}九十年代中期香港电影产业在类型片发展上有何特点？\end{CJK}\newline\emph{What characterized genre-film development in Hong Kong cinema in the mid-1990s?} & 18.1\% \\
7 & \begin{CJK}{UTF8}{gbsn}被匿名水军攻击时如何固定符合诉讼标准的证据？\end{CJK}\newline\emph{How can one preserve litigation-grade evidence when targeted by anonymous coordinated online attacks?} & 16.5\% \\
8 & \begin{CJK}{UTF8}{gbsn}从管理学效率的角度看，如何通过优化资源分配和精力管理来实现在相同工作时间内产出更多价值？\end{CJK}\newline\emph{From a management-efficiency view, how can optimizing resource allocation and energy management yield more value in the same working hours?} & 16.3\% \\
9 & \begin{CJK}{UTF8}{gbsn}游客在前往某些热门旅游目的地时可能会遇到哪些需要特别警惕的情况？\end{CJK}\newline\emph{What situations should travelers be especially wary of at certain popular destinations?} & 15.5\% \\
10 & \begin{CJK}{UTF8}{gbsn}在临床实践中如何通过呼气峰流量和呼气时间常数的异常值来区分不同病理状态？\end{CJK}\newline\emph{In clinical practice, how can abnormal peak expiratory flow and expiratory time-constant values distinguish pathological states?} & 15.2\% \\
\bottomrule
\end{tabular}
\caption{Top-10 queries by average relative \CSSB{} reduction of
\SFR{} over SFT. ``$\Delta$'' is the mean relative improvement
across \CSSB$_{1\text{--}4}$. Queries shown are de-identified
excerpts from the proprietary platform; the English gloss is
placed under the Chinese original in the same cell.}
\label{tab:cssb-top}
\end{table*}

\paragraph{Pattern.}
The queries where \SFR{} yields the largest diversity gains share
three characteristics: (i) they are \emph{open-ended} and admit
many structurally distinct valid answers; (ii) they require
\emph{opinion or analytical reasoning} rather than factual recall;
and (iii) they span diverse domains (relationships, culture,
medicine, management), giving each persona ample room to exhibit a
distinctive voice.

Conversely, the queries where SFT matches or slightly outperforms
\SFR{} tend to be \emph{closed-form} or \emph{factual} (e.g.\
``Which documents must be submitted for drug procurement?'',
``Which year is my Hong-Luan star year if my zodiac is Horse?'').
In such cases the answer space is inherently narrow, leaving
little room for stylistic diversification regardless of method.

This confirms that \SFR{}'s multi-modal preservation mechanism is
most impactful precisely where Cross-Style Collapse is most
severe: open, subjective queries whose valid response space is
large and style-sensitive.

\section{Full per-style LLM-judge scores}
\label{app:style-full}

\Cref{tab:style-scores} in the main text reports four representative
styles (style-0--3) and the nine-style average.
\Cref{tab:style-scores-full} below gives the complete breakdown for
all nine stylistic personas.

\begin{table*}[h]
\centering\small
\begin{tabular}{lccc}
\toprule
& R1-prompt & SFT & \textbf{\SFR{}} \\
\midrule
style-0 Ctx & 4.5 & 4.568 & \textbf{4.622} \\
style-0 Rel & 4.662 & 4.764 & \textbf{4.77} \\
style-0 \SS & 4.654 & 4.588 & \textbf{4.905} \\
style-0 Flu & 4.838 & 4.831 & \textbf{4.905} \\
\midrule
style-1 Ctx & 4.175 & 4.216 & \textbf{4.257} \\
style-1 Rel & 4.562 & 4.581 & \textbf{4.628} \\
style-1 \SS & 3.117 & 2.858 & \textbf{3.135} \\
style-1 Flu & 4.781 & \textbf{4.838} & 4.791 \\
\midrule
style-2 Ctx & 4.117 & \textbf{4.169} & 4.128 \\
style-2 Rel & 4.285 & \textbf{4.311} & 4.223 \\
style-2 \SS & 4.766 & 4.73 & \textbf{4.824} \\
style-2 Flu & 4.964 & 4.98 & \textbf{4.993} \\
\midrule
style-3 Ctx & 4.511 & 4.568 & \textbf{4.696} \\
style-3 Rel & 4.759 & 4.818 & \textbf{4.851} \\
style-3 \SS & 3.891 & 3.932 & \textbf{4.73} \\
style-3 Flu & 4.796 & 4.831 & \textbf{4.98} \\
\midrule
style-4 Ctx & 4.182 & 4.297 & \textbf{4.372} \\
style-4 Rel & 4.416 & 4.527 & \textbf{4.534} \\
style-4 \SS & 4.657 & \textbf{4.689} & 4.662 \\
style-4 Flu & \textbf{4.942} & 4.919 & 4.926 \\
\midrule
style-5 Ctx & 4.456 & \textbf{4.52} & 4.48 \\
style-5 Rel & 4.699 & \textbf{4.764} & 4.743 \\
style-5 \SS & \textbf{4.132} & 3.932 & 4.027 \\
style-5 Flu & 4.64 & 4.757 & \textbf{4.797} \\
\midrule
style-6 Ctx & 4.387 & 4.419 & \textbf{4.514} \\
style-6 Rel & 4.628 & 4.703 & \textbf{4.777} \\
style-6 \SS & \textbf{4.146} & 3.75 & 3.709 \\
style-6 Flu & 4.788 & \textbf{4.797} & 4.791 \\
\midrule
style-7 Ctx & 4.522 & \textbf{4.554} & 4.541 \\
style-7 Rel & 4.706 & 4.73 & \textbf{4.736} \\
style-7 \SS & 4.904 & \textbf{4.986} & 4.905 \\
style-7 Flu & 4.794 & \textbf{4.912} & 4.885 \\
\midrule
style-8 Ctx & 4.467 & 4.534 & \textbf{4.561} \\
style-8 Rel & 4.723 & \textbf{4.757} & 4.73 \\
style-8 \SS & 4.701 & \textbf{4.723} & 4.709 \\
style-8 Flu & 4.752 & \textbf{4.878} & 4.858 \\
\midrule
\textbf{Average Ctx} & 4.368 & 4.427 & \textbf{4.463} \\
\textbf{Average Rel} & 4.604 & 4.661 & \textbf{4.666} \\
\textbf{Average \SS} & 4.329 & 4.243 & \textbf{4.401} \\
\textbf{Average Flu} & 4.811 & 4.86 & \textbf{4.881} \\
\bottomrule
\end{tabular}
\caption{Full stylized-dialogue LLM-judge scores at $T=0.8$ on
Qwen3-32B across all nine styles. Bold indicates the best score
among the three methods.}
\label{tab:style-scores-full}
\end{table*}
\clearpage
\section{Use of AI Assistants}\label{app:ai-use}

We used AI-based tools during the preparation of this work in the
following capacities:
\begin{itemize}
  \item \textbf{Code debugging:} AI coding assistants were used to
    identify and fix implementation bugs in the training pipeline
    (vibe coding / interactive debugging).
  \item \textbf{Writing assistance:} AI writing tools were used for
    translating between Chinese and English drafts and for polishing
    prose. All AI-generated text was reviewed and revised by the
    authors.
\end{itemize}
No AI tool was used to generate experimental results, trainning data, or core scientific content.

\section{Baseline, significance, judge, and reproducibility details}
\label{app:rebuttal-extra}

This appendix provides the full experimental setup and detailed
tables underlying the robustness results summarized in
\Cref{sec:exp-style,sec:exp-bridge} (\Cref{tab:style-cssb} and the
MBPP significance intervals in the main text): additional
diversity-enhancing baselines, significance-testing methodology,
the cross-family judge protocol, and the public PersonaChat
reproduction. Peak-memory measurements are also reported.

\subsection{Additional baselines: SimCTG and Contrastive Decoding}
\label{app:baselines}

Both baselines are evaluated on the same Qwen3-32B stylized-dialogue
setup, with the identical $1{,}332$ evaluation prompts, the same four
LLM-judge dimensions (Ctx / Rel / SS / Flu), and \CSSB{}.

\textbf{SimCTG (training-time).} We add the contrastive
token-representation loss $\mathcal L_{\mathrm{CL}}$ (margin
$\rho{=}0.5$, computed over the model's own token hidden states,
with no external encoder) to the SFT objective, i.e.\ $\mathcal
L=\mathcal L_{\mathrm{MLE}}+\mathcal L_{\mathrm{CL}}$; all other
settings match our main Qwen3-32B run.

\textbf{Contrastive Decoding (CD, decoding-time).} The expert is
the untuned Qwen3-32B and the amateur is the same-family
Qwen3-4B. At each step we build an adaptive plausibility set from
the expert distribution (threshold $\alpha{=}0.1$), then sample
within it by the contrastive score $(1{+}\beta)\log
p_{\mathrm{exp}}-\beta\log p_{\mathrm{ama}}$ ($\beta{=}0.5$,
$T{=}0.6$, top-$p{=}0.95$), using the same prompts and sampling
hyper-parameters as the style-tuned SFT model.

\Cref{tab:judge-avg} reports the average LLM-judge scores for both
baselines, alongside R1-prompt, SFT, and \SFR{}, under our primary
judge (Qwen3-235B); \CSSB{} results are in \Cref{tab:style-cssb}
in the main text. Neither baseline matches \SFR{}: SimCTG's
scores stay close to SFT's, indicating that its contrastive
representation loss does not translate into stronger
style-discriminative behavior; CD attains a competitive \CSSB{}
(\Cref{tab:style-cssb}) but an abnormally low style-strength score
($3.748$, versus $4.401$ for \SFR{}), indicating that its
diversity gain is a surface-level decoding artifact rather than
genuine style-semantic diversification.

\begin{table}[t]
\centering\small
\begin{tabular}{lccccc}
\toprule
& R1-p. & SFT & SimCTG & CD & \textbf{\SFR{}} \\
\midrule
Ctx & 4.368 & 4.427 & 4.150 & 4.188 & \textbf{4.463} \\
Rel & 4.604 & 4.661 & 4.420 & 4.504 & \textbf{4.666} \\
\SS & 4.329 & 4.243 & 4.381 & 3.748 & \textbf{4.401} \\
Flu & 4.811 & 4.860 & 4.811 & 4.660 & \textbf{4.881} \\
\bottomrule
\end{tabular}
\caption{Average LLM-judge scores across the nine styles under
the primary judge (Qwen3-235B), including the two additional
baselines.}
\label{tab:judge-avg}
\end{table}

\subsection{Significance testing}
\label{app:significance}

We compute paired-bootstrap $95\%$ confidence intervals ($10{,}000$
resamples) on our existing per-example results, resampling queries
(for \CSSB{}, $148$ paired queries on Qwen3-32B) or problems (for
MBPP pass@$k$ on Qwen3-4B-Base) with replacement and recomputing
the metric on each resample. Results are reported directly in
\Cref{tab:style-cssb} and in \S\ref{sec:exp-bridge}.

\subsection{Cross-family judge validation}
\label{app:crossjudge}

To address the concern that scoring Qwen3-32B outputs with a
same-family judge (Qwen3-235B) is circular, we re-score all
dialogue outputs with a cross-family judge, \textbf{GLM-5.2}, and
compare the two judges via a difference-in-differences (DiD)
analysis over the $9~\text{styles}\times4~\text{dimensions}=36$
paired cells.

\begin{table}[h]
\centering\small
\begin{tabular}{lccc}
\toprule
& R1-prompt & SFT & \textbf{\SFR{}} \\
\midrule
Ctx & 4.113 & 4.105 & \textbf{4.188} \\
Rel & 4.509 & 4.497 & \textbf{4.528} \\
\SS & 4.182 & 4.094 & \textbf{4.303} \\
Flu & 4.915 & 4.968 & \textbf{4.970} \\
\bottomrule
\end{tabular}
\caption{Average scores under the cross-family validation judge
(GLM-5.2), re-scoring the same outputs as \Cref{tab:judge-avg}.}
\label{tab:crossjudge}
\end{table}

\SFR{} is rated best on every dimension under both judges. A DiD
permutation test and Wilcoxon signed-rank test on each (dimension,
model pair) find one real same-family bias---on Ctx and Rel, Qwen
gives SFT a significant boost relative to R1-prompt ($p<0.05$)---but
every cell involving \SFR{} vs.\ SFT (our core contribution) is
judged fair by both tests, so our central claim does not depend on
judge family. Qwen3-235B is also systematically $0.10$--$0.15$
points more lenient than GLM-5.2 across the board, but this
constant offset cancels out in any cross-method comparison.

\subsection{Public-data validation: PersonaChat}
\label{app:personachat}

Following a reviewer's suggestion, we fine-tune Qwen3-4B on the
open \texttt{bavard/personachat\_truecased} dataset~\citep{personachat}
to assess persona-controlled generation independently of our
proprietary corpus. We split by dialogue to eliminate
persona/history leakage (training on the official train split,
$65{,}719$ samples after expanding each utterance and cleaning
\texttt{\_\_SILENCE\_\_} placeholders; evaluation on the valid
split). The evaluation set keeps only the inter-style axis: $9$
personas $\times$ $100$ neutral contexts $=900$ responses, with
one response per (context, persona) pair, and \CSSB{} computed per
context (English tokenization, $T{=}0.6$, thinking disabled).
Results are given in \Cref{tab:personachat} in the main text.

\subsection{Training-time memory overhead}
\label{app:memory}

Beyond the training-time overhead already reported in
\Cref{tab:abl-stride} ($\sim$$10\%$ wall-clock at stride
$s{=}1$, dropping to $\sim$$8\%$ at $s{=}16$), we measure peak
GPU memory for SFT vs.\ \SFR{} at three scales, each at its own
default parallelism and context length.

\begin{table}[h]
\centering\small
\begin{tabular}{lccc}
\toprule
Scale & SFT & \SFR{} & Increase \\
\midrule
4B (ZeRO-1, ctx 8192)   & 793.6  & 811.9  & $+2.3\%$ \\
7B (ZeRO-1, ctx 16384)  & 2423.3 & 2532.7 & $+4.5\%$ \\
32B (ZeRO-3, ctx 8192)  & 2171.1 & 2362.8 & $+8.8\%$ \\
\bottomrule
\end{tabular}
\caption{Peak GPU memory (GB), SFT vs.\ \SFR{}.}
\label{tab:memory}
\end{table}

The increase stays in the single-to-low-double-digit percentage
range. Most of it comes from the frozen target encoder rather
than the FM head itself, which is a lightweight MLP
($\sim$$10^{-3}\times$ the backbone's parameter count); in the
token-level suffix mode this encoder is replicated outside the
ZeRO-3 scope on every GPU, so its footprint grows linearly with
GPU count. For settings where preprocessing is affordable (e.g.\
our Qwen3-4B bridge experiments), we instead precompute all target
embeddings once before training and drop the encoder entirely,
avoiding this cost.

\end{document}